\newtheorem{theorem}{Theorem}[section]
\newtheorem{assumption}{Assumption}[section]
\newtheorem{proposition}{Proposition}[section]
\newtheorem{remark}{Remark}[section]
\newtheorem{lemma}[theorem]{Lemma}
\def\E{\mathbb E}
\def\R{\mathbb R}
\def\S{\mathbb S}
\def\N{\mathcal N}
\def\bx{\mathbf x}
\def\bw{\mathbf w}
\def\bv{\mathbf v}
\def\bu{\mathbf u}
\def\ba{\bm{\alpha}}
\def\bx{\mathbf x}
\begin{document}


\runningtitle{Learning Gaussian Multi-Index Models with Gradient Flow}

%

\twocolumn[

\aistatstitle{{\Large Learning Gaussian Multi-Index Models with Gradient Flow:} \\ {\Large Time Complexity and Directional Convergence}}

\aistatsauthor{ Berfin \c{S}im\c{s}ek \And Amire Bendjeddou \And  Daniel Hsu }

\aistatsaddress{ Flatiron Institute \And  EPFL \And Columbia University }]

\begin{abstract}
  This work focuses on the gradient flow dynamics of a neural network model that uses correlation loss to approximate a multi-index function on high-dimensional standard Gaussian data. 
  Specifically, the multi-index function we consider is a sum of neurons $f^*(\bx) \!=\! \sum_{j=1}^k \! \sigma^*(\bv_j^T \bx)$ where $\bv_1, \dots, \bv_k$ are unit vectors, and $\sigma^*$ lacks the first and second Hermite polynomials in its Hermite expansion. 
  It is known that, for the single-index case ($k\!=\!1$), overcoming the search phase requires polynomial time complexity. 
  We first generalize this result to multi-index functions characterized by vectors in arbitrary directions.  
  After the search phase, it is not clear whether the network neurons converge to the index vectors, or get stuck at a sub-optimal solution. 
  When the index vectors are orthogonal, we give a complete characterization of the fixed points and prove that neurons converge to the nearest index vectors. 
  Therefore, using $n \! \asymp \! k \log k$ neurons ensures finding the full set of index vectors with gradient flow with high probability over random initialization. 
  When $\bv_i^T \bv_j \!=\! \beta \! \geq \! 0$ for all $i \neq j$, we prove the existence of a sharp threshold $\beta_c \!=\! c/(c+k)$ at which the fixed point that computes the average of the index vectors transitions from a saddle point to a minimum. 
  Numerical simulations show that using a correlation loss and a mild overparameterization suffices to learn all of the index vectors when they are nearly orthogonal, however, the correlation loss fails when the dot product between the index vectors exceeds a certain threshold. 
\end{abstract}

\section{Introduction}

Suppose a neural network model is trained to approximate the input-output pairs generated by a multi-index function by following the gradients of the loss function. 
Such loss functions are notoriously non-convex and complex to analyze in general due to the commonly observed degeneracy at initialization and a large number of fixed points, including local minima.
As a result, a randomly initialized algorithm may take a long time to find the subspace of the index vectors, and even then, may fail to match the index vectors with network neurons. 
The failure of the algorithm may be due to poor initialization--which can be overcome using overparameterization--or due to the emergence of local minima related to the geometry of the index vectors. 

More concretely, we study a stylized setting where the multi-index function is $\smash{f^*(\bx) = \sum_{j=1}^k \sigma^*(\bv_j^T \bx)}$ for unit-norm index vectors $\bv_1,\dots,\bv_k$, while the neural network is $\smash{f(\bx) = \sum_{i=1}^n \sigma(\bw_i^T \bx)}$ for unit-norm so-called neurons (that are also vectors) $\bw_1,\dots,\bw_n$.
The activation functions $\sigma^* \!,\! \sigma \!:\! \R \to \R$ may differ but are fixed in both networks.
We use the correlation loss as the training objective
\begin{align}\label{prob:multi-neuron}
    \min C - \E_{\bx \sim \N(0, I_d)} \biggl[ \sum_{i=1}^n \sigma (\bw_i^T \bx) \sum_{j=1}^k \sigma^*(\bv_j^T \bx) \biggr]
\end{align}
where the expectation is taken over the standard Gaussian distribution $\N(0,I_d)$ in $\R^d$. 
Some variations of this setting are extensively used to study the ability of neural networks to learn a low-dimensional subspace in high-dimensions \citep{ba2022high, bietti2022learning, barak2022hidden, damian2022neural, abbe2023sgd, berthier2023learning, bietti2023learning} but using modified algorithms that may or may not be related to the behavior of gradient flow.
Exciting recent work \citep{glasgow2023sgd, oko2024learning, arous2024high} proved global convergence results for gradient flow and characterized the end-to-end behavior of individual neurons. 
Some of these works and many others have provided sample complexity guarantees in the scenario when gradient flow is successful in the population loss limit. 
The question of optimal sample complexity in such scenarios is an active area of research and requires a careful selection of learning rate and whether to reuse the data to mitigate the noise due to finite sampling \citep{damian2023smoothing, dandi2024benefits, lee2024neural, arnaboldi2024repetita}.

In this work, we study the gradient flow of the population loss and characterize whether it succeeds, or fails. 
This question is studied by \cite{safran2018spurious} through high precision numerical simulations when $\sigma(x) \!=\! \sigma^*(x) \!=\! \max(0, x)$, showing that gradient flow fails when $n \!=\! k \!\geq\! 6$ under the mean squared error (MSE) loss. 
\cite{arjevani2021analytic, simsek2023should} gave algebraic expressions for the fixed points associated with partitions of the symmetry group for the ReLU and erf activation functions, respectively. 
The combinatorial growth of the number of fixed points for the MSE loss \citep{simsek2021geometry} makes it very challenging to study the end-to-end gradient flow dynamics for finite-width neural networks. 
Indeed, for the more realistic setting of the MSE loss, this is only achieved so far for a single-index model \citep{xu2023over} thanks to the absence of the fixed points emerging from the combinations of the index vectors, or for quadratic activation function \citep{martin2024impact} which trades the permutation symmetry with rotational symmetry.
In this paper, following \cite{bietti2023learning, arous2024high}, we study the correlation loss instead of the MSE loss and consider a broad family of activation functions. 
A key difference is that we do not restrict the weight space to an orthogonal frame (aka the Stiefel manifold), which allows for studying the success or failure of gradient flow when learning index vectors in more general positions than an orthogonal frame. 

More concretely, the model~\eqref{prob:multi-neuron} features a perfect decoupling of neural network neurons. 
Thus, the multi-neuron problem reduces to studying the trajectory of a single neuron given the initial condition, and how good the random initialization over multiple neurons is. 
The evolution of a single neuron is described by the following dynamical system
\begin{align}\label{prob:single-neuron}
    \frac{d}{dt} \bw_i &= -(I_d - \bw_i \bw_i^T) \nabla L(\bw_i) \notag \\
    L(\bw_i) &= C - \E \biggl[ \sigma (\bw_i^T \bx) \sum_{j=1}^k \sigma^*(\bv_j^T \bx) \biggr] 
\end{align}
where $\nabla$ denotes the standard Euclidean gradient, $(I_d - \bw_i \bw_i^T)$ is the projection onto the tangent space of the unit sphere at $\bw_i$, and expectation is always taken with respect to the standard Gaussian input hence $\mathcal{N}(0, I_d)$ notation is dropped hereafter. 
\cite{mondelli2019connection} studied the same model and argued that matching the neurons to the index vectors is computationally hard when the number of index vectors exceeds $d^{3/2}$ under certain complexity-theoretic assumptions on tensor decomposition. 
We take a purely geometric approach and prove that a special fixed point---one that computes the average of the index vectors---transitions from a saddle point to a minimum when the dot product between distinct index vectors exceeds an explicit threshold. 
Overall our analysis shows a dichotomy in gradient flow behavior: when the index vectors are orthogonal, we prove that the single neuron converges to the nearest index vector, whereas, when the index vectors are too close or too many with small positive dot products, the single neuron converges to the average of the index vectors, hence failing to match the neurons to the index vectors.
We partially explain the latter behavior by proving the saddle-to-minimum transition and providing complementary numerical simulations. 

Our particular contributions are listed below:

\begin{itemize}
    \item while our time complexity analysis is similar to recent work following \cite{arous2021online} when considering population gradient flow, it applies to the most general setting when the index vectors are in arbitrary directions (Section~\ref{sec:time-complexity});
    \item to show convergence to the nearest index vector, we identify a Lyapunov function that has a monotonic behavior over time (Section~\ref{sec:dir-convergence}), as a result, a mild overparameterization of $k \log (k)$ neurons is sufficient for learning all of the index vectors with gradient flow, (Section~\ref{sec:gflow-mild-OP}); here we consider random initialization of neurons on the sphere, and these two results give use the complete qualitative characterization of the dynamical system;
    \item we then focus on the effect of the geometry of index vectors, and prove a saddle-to-minimum transition when the index vectors form an equiangular frame (Section~\ref{sec:saddle-to-min});
    \item numerical simulations show that the gradient flow gets stuck at this minimum and fails to match the neurons to the index vectors when the index vectors get too close to each other (Section~\ref{sec:simulations});
    \item finally, we establish an exact correspondence between tensor decomposition and the neural network model~\eqref{prob:multi-neuron} which shows the fundamental difficulty of studying gradient flow for arbitrary geometries since most tensor problems are NP hard \citep{hillar2013most}.
\end{itemize}

\subsection{Notation}

We use bold letters to denote vectors such as $\bw, \bv, \bx$ and capital letters for the matrices such as 
$W \!=\! [\bw_1, \dots, \bw_n]$ which has size $ d \times n$ and $V \!=\! [\bv_1, \dots, \bv_k]$ which has size $d \times k$. 
The matrix of dot products between the index vectors is denoted by $A \!=\! V^T V$ which has size $k \times k$.
We assume that the index vectors are linearly independent, hence $A$ is invertible.
The set of indices is denoted by $[k] \!=\! \{ 1, \dots, k \}$.  
Since the input data distribution is standard Gaussian, it is natural to expand the activation functions $\sigma, \sigma^* \!:\! \R \!\to\! \R$ using Hermite polynomials.
Specifically, the following inner product
\[    
    \bigl\langle f_1, f_2 \bigr\rangle_\phi \!:=\! \int_{\R} f_1(x) f_2(x) \phi(x) dx, \quad \phi(x) \!=\! \frac{1}{\sqrt{2 \pi}} e^{-x^2/2} 
\]
is used to apply a Gram-Schmidt process to the sequence of monomials $1, x, x^2, x^3, \dots $ which gives the probabilist's Hermite polynomials. 
It is convenient to use the normalized sequence 
\[
    h_0 \!=\! 1, \ \ h_1(x) \!=\! x, \ \ h_2(x) \!=\! \frac{x^2 \!-\! 1}{\sqrt{2}}, \ \ h_3(x) \!=\! \frac{x^3 \!-\! 3x}{\sqrt{3!}}, \dots
\]
which ensures $ \bigl\langle h_p, h_{p'} \bigr\rangle_{\phi} \!=\! \delta_{p p'}$. 
We assume that the activation functions have finite norm $\| \sigma \|_{\phi}, \| \sigma^* \|_{\phi} \!<\! \infty$ where the norm here is defined with respect to the inner product $ \| \sigma \|_{\phi} \!=\! \bigl \langle \sigma, \sigma \bigr \rangle_{\phi}^{1/2} $.
The activation functions $\sigma, \sigma^* \!:\! \R \!\to\! \R$ are then expanded as
\[
    \sigma(x) = \sum_{p=1}^\infty a_p h_p(x), \quad \quad \sigma^*(x) = \sum_{p=1}^\infty b_p h_p(x),
\]
where the convergence holds with respect to the norm $\| \cdot \|_\phi$. 
The sequences $(a_p)_{p \geq 1}, (b_p)_{p \geq 1}$ are called the Hermite coefficients. 
For inner products with respect to two-dimensional Gaussian distribution, we use expressions involving the Hermite coefficients. 
For the Euclidean $\ell_p$ norm of vectors, we use $\| \cdot \|_p$. 

\subsection{Low Dimensional ODE}

The ODE~\eqref{prob:single-neuron} describes the movement of $d$-dimensional vector $\bw$ (also called a neuron) on the unit sphere. 
Due to the rotational invariance of the standard Gaussian, the two-point Gaussian integral can be expressed as 
\[
    g_{\sigma, \sigma^*} (\bw^T \bv_j) \!=\! \E[\sigma(\bw^T \bx) \sigma^*(\bv_j^T \bx)] 
\]
where $g_{\sigma, \sigma^*} \!:\! [-1,1] \to \R$ is the natural extension of the so-called dual activation function \citep{daniely2016toward} to the case when $\sigma$ and $\sigma^*$ may be non-matching. 
As a result, one can express the correlation loss in terms of the dot products between $\bw$ and $\bv_1, \dots, \bv_k$. 
This enables us to express the surface of the loss $L : \mathbb{S}^{d-1} \to \R$ using $k$ variables 
\begin{align*}
    L_0(V^T \bw) &= L(\bw).
\end{align*}
The loss $L_0 : \R^k \to \R$ can be viewed as mapping the subspace spanned by the index vectors to the loss values. 
In effect, it will be sufficient to analyze the behavior of the induced flow on this subspace.

\begin{lemma}\label{lem:projection} Assume that $\bw(t)$ solves the ODE ~\eqref{prob:single-neuron}  
given an initial condition $\bw \!=\! \bw_0$. 
Then the vector of dot products $\bu(t) = V^T \bw(t)$ solves the following ODE
\[
     \frac{d}{dt} \bu = - (A - \bu \bu^T) \nabla L_0(\bu) 
\]
with an initial condition $ \bu_0 = V^T \bw_0 $.  
\end{lemma}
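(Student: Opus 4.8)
The plan is to differentiate $\bu(t) = V^T\bw(t)$ and reduce the computation to the ODE~\eqref{prob:single-neuron} by the chain rule. A preliminary observation I would record is that the flow stays on the unit sphere: $\tfrac{d}{dt}\|\bw\|^2 = 2\bw^T\dot\bw = -2\bw^T(I_d - \bw\bw^T)\nabla L(\bw) = 0$, since $\bw^T(I_d - \bw\bw^T) = 0$ whenever $\|\bw\| = 1$. Hence $\|\bw(t)\| = \|\bw_0\| = 1$ for all $t$, so the identity $L_0(V^T\bw) = L(\bw)$ (which holds on the sphere) may be used all along the trajectory.

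Next I would relate $\nabla L(\bw)$ to $\nabla L_0(\bu)$. The auxiliary function $\bw \mapsto L_0(V^T\bw)$ on $\R^d$ has Euclidean gradient $V\,\nabla L_0(V^T\bw)$ by the chain rule; since it agrees with $L$ on the unit sphere, the difference $h := L - L_0\circ V^T$ vanishes identically there, so for any unit $\bw$ and any tangent direction $\bt$ (with $\bt^T\bw=0$) one has $\nabla h(\bw)^T\bt = 0$, i.e.\ $\nabla h(\bw)$ is parallel to $\bw$ and is therefore annihilated by the tangential projection. This gives, for $\|\bw\|=1$,
\[
  (I_d - \bw\bw^T)\,\nabla L(\bw) \;=\; (I_d - \bw\bw^T)\,V\,\nabla L_0(V^T\bw).
\]
This is the one delicate point, and I expect it to be the main obstacle to state carefully: the gradient in~\eqref{prob:single-neuron} is that of $L(\bw) = C - \E[\sigma(\bw^T\bx)\sum_{j=1}^k\sigma^*(\bv_j^T\bx)]$ viewed as a function on all of $\R^d$, and this function does \emph{not} coincide with $L_0(V^T\bw)$ off the sphere (there the integral also depends on $\|\bw\|$), so the chain rule cannot be applied to $\nabla L$ directly; only its tangential component feeds into the dynamics, and that component does agree with the one coming from $L_0\circ V^T$. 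An equivalent route would be to use Gaussian integration by parts to show $\nabla L(\bw)\in\mathrm{span}\{\bw,\bv_1,\dots,\bv_k\}$ and then identify the coefficients of the $\bv_j$ with $\nabla L_0(\bu)$, but the extension argument above is cleaner.

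Finally I would substitute into the definition of $\bu$ and simplify using the display above together with $\|\bw\|=1$:
\begin{align*}
  \frac{d}{dt}\bu
  &= V^T\frac{d}{dt}\bw
   = -\,V^T(I_d - \bw\bw^T)\,\nabla L(\bw) \\
  &= -\,V^T(I_d - \bw\bw^T)\,V\,\nabla L_0(\bu) \\
  &= -\bigl(V^TV - (V^T\bw)(V^T\bw)^T\bigr)\nabla L_0(\bu) \\
  &= -\,(A - \bu\bu^T)\,\nabla L_0(\bu),
\end{align*}
where I used $V^TV = A$ and $V^T\bw\bw^TV = (V^T\bw)(V^T\bw)^T = \bu\bu^T$. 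The initial condition $\bu_0 = V^T\bw_0$ holds by definition, which completes the argument.
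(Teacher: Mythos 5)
Your proof is correct and takes essentially the same route as the paper's: differentiate $\bu = V^T\bw$, substitute the flow from~\eqref{prob:single-neuron}, and use $V^T(I_d-\bw\bw^T)V = A - \bu\bu^T$. The one point where you are more careful than the paper is the identity $(I_d-\bw\bw^T)\nabla L(\bw) = (I_d-\bw\bw^T)V\nabla L_0(V^T\bw)$: the paper obtains it by differentiating $L(\bw)=L_0(V^T\bw)$ as if it held on all of $\R^d$, which is only literally valid if $L$ off the sphere is understood as the polynomial extension~\eqref{eq:corr-loss}, whereas your observation that $L - L_0\circ V^T$ vanishes on the sphere, so its gradient at a unit $\bw$ is radial and is annihilated by the tangential projection, closes this minor gap cleanly without changing the argument.
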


The proof uses a simple chain rule (see Appendix~\ref{sec:reparam-loss-app}). 
Note that when the index vectors are orthogonal, the induced flow is a spherical gradient flow. 

Using a nice property of the Hermite polynomials \citep[Chapter 11.2]{o2021analysis}, one can expand the two-point Gaussian integral, hence the loss $L_0$ is 
\begin{align}\label{eq:corr-loss}
    L_0(\bu) = C - \sum_{p = 1}^\infty a_p b_p \| \bu \|_p^p
\end{align}
where $\bu = V^T \bw$. 
The domain of the loss $L_0$ is determined by the geometry of the index vectors and is an ellipsoidal ball $D = \{ \bu : \bu^T A^{-1} \bu \leq 1 \}$. 
For orthogonal index vectors, note that the domain is a unit ball since $A = I_k$. 
The derivation of the boundary uses simple linear algebra and is given in Appendix~\ref{sec:reparam-loss-app}.

This paper considers the case when $a_p b_p = 0$ for $p = 1$ which implies that $\bu = 0$ is a fixed point of the loss since $\nabla L_0(\bu) \bigl|_{\bu = 0} = 0$.  
More precisely, the lowest degree Hermite polynomial in the expansion determines the degree of degeneracy of the fixed point $\bu = 0$.
This important quantity of the target activation function $\sigma^* : \R \to \R$, that is,  
\[
    p^* := \text{argmin}_{p \geq 1} \left( b_p \neq 0 \right). 
\]
is called the information exponent in the literature \citep{dudeja2018learning, arous2021online}.

\begin{assumption}\label{ass:inf-exp} We consider the case when $p^* \geq 3$, or $p^* \geq 2$ and there is a higher-order component breaking the rotational symmetry, i.e., $ a_p b_p \neq 0$ for some $p \geq 3$. 
\end{assumption}

It is necessary to assume $(a_p b_p) \bigl|_{p = p^*} > 0$ to ensure the decay of the loss near $ \bu = 0$ at initialization. 
We also assume the following throughout the paper for technical reasons.

\begin{assumption}\label{ass:hermite-coeff} The Hermite coefficients of the activation functions $\sigma, \sigma^*$ have the same sign:
\[
    c_p := a_p b_p \geq 0 \quad \text{for all} \quad p \geq p^*.
\]
\end{assumption}
Assumption~\ref{ass:hermite-coeff} is satisfied for example for the matching activation functions $\sigma \!=\! \sigma^*$. 
It may be possible to relax this Assumption to allow some negative coefficients in the higher-order terms as long as there are positive coefficients with larger magnitudes in the lower-order terms.

\begin{assumption}\label{ass:series-convergence}
We assume that the following series converges
\[
\sum_{p \geq p^*}a_pb_pp \lambda_{\max}(A)^{\frac{p}{2}} < \infty.
\]
\end{assumption}

Assumption~\ref{ass:series-convergence} ensures that the gradient of the correlation loss is well-defined. 

Assumption~\ref{ass:inf-exp}, Assumption~\ref{ass:hermite-coeff}, Assumption~\ref{ass:series-convergence} are standard in literature and used everywhere.

\begin{assumption}\label{ass:index-vectors} We assume that the index vectors have a positive dot product
\[
    \bv_j^T \bv_{j'} \geq 0 \quad \text{for all} \quad j,j' \in [k].
\]
\end{assumption}

We need this assumption to avoid situations such as the following worst case, $\bv_j \!=\! -\bv_{j'}$ for a pair of $j, j'$ where the two index vectors would be pulling the single neuron in the opposite directions (that is, for non-even activations). This scenario would require a finer analysis than what we do. 
In particular, for odd activations, the gradient corresponding to index vector $j$ and $j'$ cancels out hence learning is impossible. 

Assumption~\ref{ass:index-vectors} is also used everywhere.
This assumption is sufficient for characterizing the time complexity of the gradient flow at early times for arbitrary geometries of the index vectors that satisfy it. 
However, the fixed point structure and infinite-time behavior of gradient flow is tightly coupled with the exact geometry of the index vectors, hence, we need a more rigid assumption to study these questions. 

\begin{assumption}\label{ass:equi-frame} We assume that the index vectors for an equi-angular tight frame with a non-negative dot product
\[
    \bv_i^T \bv_j = \beta \in [0, 1]
\]
where $\beta=0$ corresponds to the orthogonal frame, and $\beta=1$ corresponds to all index vectors being equal to each other, resulting in a single-index model of the form $f^*(\bx) = k \sigma^*(\bv_1^T \bx)$. 
\end{assumption}

\section{Learning Index Vectors}

\subsection{Time Complexity}
\label{sec:time-complexity}

A learning algorithm should ensure a non-vanishing correlation between the predictor $f$ and the target $f^*$:
\[
    \E [f(x) f^*(x)] > \text{const.}
\]
which is vanishing at random initialization due to high dimensionality of the unit sphere ($d$ is large). 
In Subsection~\ref{sec:time-complexity}, we study how much time gradient flow spends to learn a non-vanishing correlation with the target function. Since the dynamics of individual neurons are decoupled, it suffices to study the behavior of a single neuron. 

A vector $\bw$ (neuron) is sampled uniformly on the unit sphere at initialization. 
The dot product between this vector and an unknown vector is small: with high probability, $\bw(0)^T \bv_0 = \Theta (d^{-1/2})$. 

Following gradient flow, the vector will aggregate the spherical gradient of the correlation loss
\[
    \nabla^S L(\bw) = \sum_{p \geq p^*} a_p b_p p \bigl(\sum_{j=1}^k (\bv_j^T \bw)^{p-1} \bv_j -   \sum_{j=1}^k (\bv_j^T \bw)^{p} \bw \bigr)
\]
which points to the $k$-dimensional subspace spanned by the index vectors at early times since the dot products $\bw(0)^T \bv_j$ are small for all $j \in [k]$.  
In Theorem~\ref{thm:time-complexity}, we answer the question: how much time $T$ is necessary and sufficient to find the subspace, i.e., to reach non-vanishing projection $\| V^T \bw(T) \|_2 = \Theta(1)$? 

\begin{theorem}[Time complexity]\label{thm:time-complexity} We assume that $d$ is large and $k \ll d$. 
Sample $\bw(0)$ uniformly on the sphere such that $\bw(0)^T \bv_j = \Theta(d^{-1/2})$ with high probability. 
We also assume that $\bw(0)^T \bv_j>0$ for all $ j \in [k]$.
For $p^* \geq 2$, we show the following time complexities 
\begin{itemize}
    \item if $p^* \geq 3$, $T=\Theta(d^{p^*/2-1})$ is necessary and sufficient to reach $\| V^T \bw(T) \|_2 = \Theta(1)$, 
    \item if $p^* = 2$, $T=\Theta(\log(d))$ is necessary and sufficient to reach $\| V^T \bw(T) \|_2 = \Theta(1)$.
\end{itemize}
\end{theorem}
  
\begin{proof}[Proof sketch]
The key step in the proof is confining the time evolution of $S = \sqrt{\bu^T A^{-1} \bu}$ which is the generalized Euclidean norm of $\bu$ with respect to the metric $A^{-1}$. 
In effect, we bound the growth of the quantity $S$, which in turn gives the growth of $\ell_2$ norm of $\bu$ thanks to the following bounds 
\[
    \lambda_{\min}(A)^{1/2} S \leq \| \bu \|_2 \leq  \lambda_{\max}(A)^{1/2} S
\]
where $\lambda_{\min}(A)$ and $\lambda_{\max}(A)$ are the smallest and biggest eigenvalues of $A$ respectively. 
(We regard $\lambda_{\min}(A)$ and $\lambda_{\max}(A)$ as constants here since $k \ll d$.)
Explicitly writing the time derivative of $S$ gives an expression including higher-order powers of $\bu$. 
Using Hölder type inequalities, we give upper and lower bounds for each such term in terms of $S$. 
This effectively reduces the system to one dimension, hence $S$ has the same time complexity as the single index case which completes the proof.
\end{proof}

In qualitative terms, the initial escape time is determined by the lowest-order term of the correlation loss: a bigger lowest-order term resulted in gradients with a smaller norm, hence increasing the time complexity of the gradient flow.
Theorem~\ref{thm:time-complexity} gives a sharp characterization of the time complexity (matching upper and lower bounds). 
The proof is given in Appendix~\ref{sec:time-complexity-app}.

\begin{remark} \cite{arous2021online} obtains the same time complexity guarantee $\Theta(d^{p^*/2-1})$ for the single index model. 
More recently, \cite{arous2024high} obtained the lower bound on the time complexity $\Omega(d^{p^*/2-1})$ for a multi-index model for the special case when the index vectors are orthogonal.
Our Theorem~\ref{thm:time-complexity} generalizes these results for arbitrary geometries of index vectors and provides both the upper and lower bounds on time complexity of gradient flow. 
\end{remark}

\paragraph{Further Generalizations} In Theorem~\ref{thm:time-complexity}, we assumed that $k \ll d$ which implied that the magnitude of the projection is $\| V^T \bw(0) \|_2 = \Theta (d^{-1/2})$.
In principle, our proof technique allows us to study the case $k = d^{\gamma}$ where $\gamma \in (0, 1)$ and $d$ is large. 
First, at initialization, the norm of the projection to the subspace is $ \| V^T \bw(0) \|_2 = \Theta (d^{-(1-\gamma)/2})$ with high probability. 
If the index vectors are orthogonal, it is possible to generalize our time complexity guarantees by recalculating the definite integrals with the new initial condition. 
If the index vectors are in arbitrary positions (e.g., sampled uniformly from the Haar measure on the unit sphere), $\lambda_{\min}(A)$ and $\lambda_{\max}(A)$ may depend on $d$ and change the time complexity in a non-trivial way.
Studying this setting would require tools from random matrix theory and is beyond the scope of this paper. 

\subsection{Directional Convergence}
\label{sec:dir-convergence}

In Subsection~\ref{sec:dir-convergence}, we study the infinite-time behavior of gradient flow when the index vectors form an orthonormal frame. 
In particular, we characterize the complete set of fixed points by establishing an exact correspondence with tensor eigenvectors (Proposition~\ref{prop:fixed-points}) and give global convergence guarantees (Proposition~\ref{prop:dir-converge}). 
Our analysis fully characterizes the qualitative behavior of a single neuron. 
We only require $d \geq k$ for the orthogonal frame in this Subsection. 

\begin{proposition}[Fixed Points $\leftrightarrow$ Eigenvectors]\label{prop:fixed-points} Assume that $\sigma^* = h_{p^*}$, $p^* \geq 3$,  and $\bv_i^T \bv_j = \delta_{ij}$.
The fixed points of the dynamical system~\eqref{prob:single-neuron} are in one-to-one correspondence with the unit eigenvectors of the $p^*$-th order tensor $T = \sum_{j=1}^k \bv_j^{\otimes p^*}$.    
\end{proposition}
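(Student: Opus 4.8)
The plan is to specialize the correlation loss to $\sigma^*=h_{p^*}$, write the Euclidean gradient of $L$ as a tensor contraction, and then observe that stationarity on the sphere is \emph{verbatim} the defining equation of a unit-norm tensor eigenvector; the claimed bijection will turn out to be the identity map.

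\emph{Step 1: the gradient is a tensor contraction.} Because $\sigma^*=h_{p^*}$, the dual activation collapses to a single monomial: using orthonormality of the normalized Hermite basis together with the identity $\E[h_p(X)h_q(Y)]=\delta_{pq}\rho^p$ for a correlation-$\rho$ standard Gaussian pair $(X,Y)$, one gets $g_{\sigma,\sigma^*}(\rho)=a_{p^*}\rho^{p^*}$, hence
\[
L(\bw)=C-a_{p^*}\sum_{j=1}^k(\bv_j^T\bw)^{p^*},\qquad
\nabla L(\bw)=-a_{p^*}p^*\sum_{j=1}^k(\bv_j^T\bw)^{p^*-1}\bv_j .
\]
Writing $T(\bw^{p^*-1}):=\sum_{j=1}^k(\bv_j^T\bw)^{p^*-1}\bv_j$ for the contraction of $T=\sum_j\bv_j^{\otimes p^*}$ against $p^*-1$ copies of $\bw$, this is $\nabla L(\bw)=-c\,T(\bw^{p^*-1})$ with $c:=a_{p^*}p^*>0$, where positivity uses the standing assumption $(a_pb_p)|_{p=p^*}>0$ and $b_{p^*}=1$. (Orthogonality of the $\bv_j$ plays no role here; it enters only because the eigenstructure of $T$ is explicitly understood for orthogonally decomposable tensors, which is the intended use of the proposition.)

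\emph{Step 2: stationarity $=$ the eigenvector equation.} The flow~\eqref{prob:single-neuron} leaves $\S^{d-1}$ invariant (indeed $\tfrac{d}{dt}\|\bw\|_2^2=-2(\bw^T\nabla L(\bw))(1-\|\bw\|_2^2)$), and the neuron lives on $\S^{d-1}$ by construction, so we restrict to it: there $\bw$ is a fixed point iff $(I_d-\bw\bw^T)\nabla L(\bw)=0$, i.e.\ iff $\nabla L(\bw)$ is parallel to $\bw$. Substituting Step~1 and dividing by $-c\neq0$, this is equivalent to $T(\bw^{p^*-1})=\mu\,\bw$ for some $\mu\in\R$; pairing with the unit vector $\bw$ forces $\mu=\bw^T T(\bw^{p^*-1})=\sum_j(\bv_j^T\bw)^{p^*}$. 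Conversely, any unit vector satisfying $T(\bw^{p^*-1})=\mu\bw$ satisfies the stationarity condition with that same $\mu$. Hence the fixed-point set of~\eqref{prob:single-neuron} coincides, as a subset of $\S^{d-1}$, with the set of unit-norm eigenvectors of $T$, and the one-to-one correspondence is the identity, the eigenvalue being $\mu=\sum_j(\bv_j^T\bw)^{p^*}$.

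\emph{Expected obstacle.} There is no hard analytic step; the care is entirely in the bookkeeping. One must make sure the eigenvalue is not an extra free parameter but is pinned to $\sum_j(\bv_j^T\bw)^{p^*}$ by the constraint $\|\bw\|_2=1$, so that the two equations genuinely match; and one must account for degenerate solutions consistently on both sides. A unit vector $\bw$ with $\bv_j^T\bw=0$ for every $j$ has $V^T\bw=0$, hence $\nabla L(\bw)=0$, hence is a fixed point, and is simultaneously an eigenvector of $T$ with eigenvalue $0$; conversely, linear independence of the $\bv_j$ (which holds since they are orthonormal) shows that any solution of $T(\bw^{p^*-1})=\mu\bw$ with $\mu\neq0$ lies in $\operatorname{span}(\bv_1,\dots,\bv_k)$, so these two families exhaust the fixed points. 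If desired, restricting the nonzero-eigenvalue case to $\operatorname{span}(\bv_1,\dots,\bv_k)$ and identifying it with $\R^k$ turns $T$ into a sum of $p^*$-th tensor powers of an orthonormal basis, whose eigenvectors are classical, but this refinement is not needed for the correspondence itself.
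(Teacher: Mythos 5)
Your proposal is correct and follows essentially the same route as the paper's proof: specialize the loss to $\sigma^*=h_{p^*}$, note that sphere-stationarity means $\nabla L(\bw)$ is parallel to $\bw$, and observe this is verbatim the unit-norm tensor eigenvector equation $\sum_{j}(\bv_j^T\bw)^{p^*-1}\bv_j=\lambda\bw$. Your extra bookkeeping (deriving the gradient from the Hermite identity, pinning the eigenvalue to $\sum_j(\bv_j^T\bw)^{p^*}$ via $\|\bw\|_2=1$, and noting the $V^T\bw=0$ fixed points correspond to eigenvalue-zero eigenvectors) is a harmless elaboration of the same argument.
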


\begin{proof} Let $\bw$ be a unit vector. 
It is an eigenvector of $T$ if and only if 
\[
    T(\bw, \dots, \bw, I) = \sum_{j=1}^k (\bv_j^T \bw)^{p^*-1} \bv_j = \lambda \bw
\]
where the first equality holds due to the orthogonality of $\bv_j$.
On the other hand, $\bw$ is a fixed point of the dynamical system~\eqref{prob:single-neuron} when the Euclidean gradient is orthogonal to the tangent subspace of the unit sphere, that is 
\begin{align*}
    \nabla L(\bw) = \lambda \bw \ \ \Leftrightarrow \ \ \sum_{j=1}^k (\bv_j^T \bw)^{p^*-1} \bv_j = \lambda \bw.
\end{align*}
Note that the two conditions are the same. 
\end{proof}

\paragraph{Complete set of fixed points} Proposition~\ref{prop:fixed-points} shows that the fixed points of the dynamical system are in one-to-one correspondence with the eigenvectors of the tensor $T$ in the orthonormal case. 
Fortunately, \citet[Theorem 2.3]{robeva2016orthogonal} characterized the complete set of eigenvectors for the orthonormal tensor, which are (informally stated)
\begin{align*}
    &\sum_{j \in I} \xi_j \bv_j, \quad I \neq \emptyset, I \subset [k], \ \xi_j \in \{ (-1)^{p^*+1}, 1 \}, \ \text{and} \\
    &\bv^\perp \ \ \text{such that} \ (\bv^\perp)^T \bv_j = 0 \  \text{for all} \ j \in [k].
\end{align*}
Proposition~\ref{prop:fixed-points} gives us that normalizing the eigenvectors to unit norm gives the complete set of fixed points of the dynamical system~\eqref{prob:single-neuron} when the target activation function has a single Hermite component.
We call $\{\xi_1 \bv_1, \dots,  \xi_k \bv_k \} $ `pure' fixed points, and the remaining ones `mixed' fixed points which compute the normalized average of more than one index vector. Geometrically, this gives us (for $p^*$ odd): $ 2^k -1 $ isolated fixed points and a subspace of fixed points that is $d-k$ which is an empty set for $d=k$.

\begin{figure}[t]
     \centering
     \includegraphics[width=0.5\textwidth]{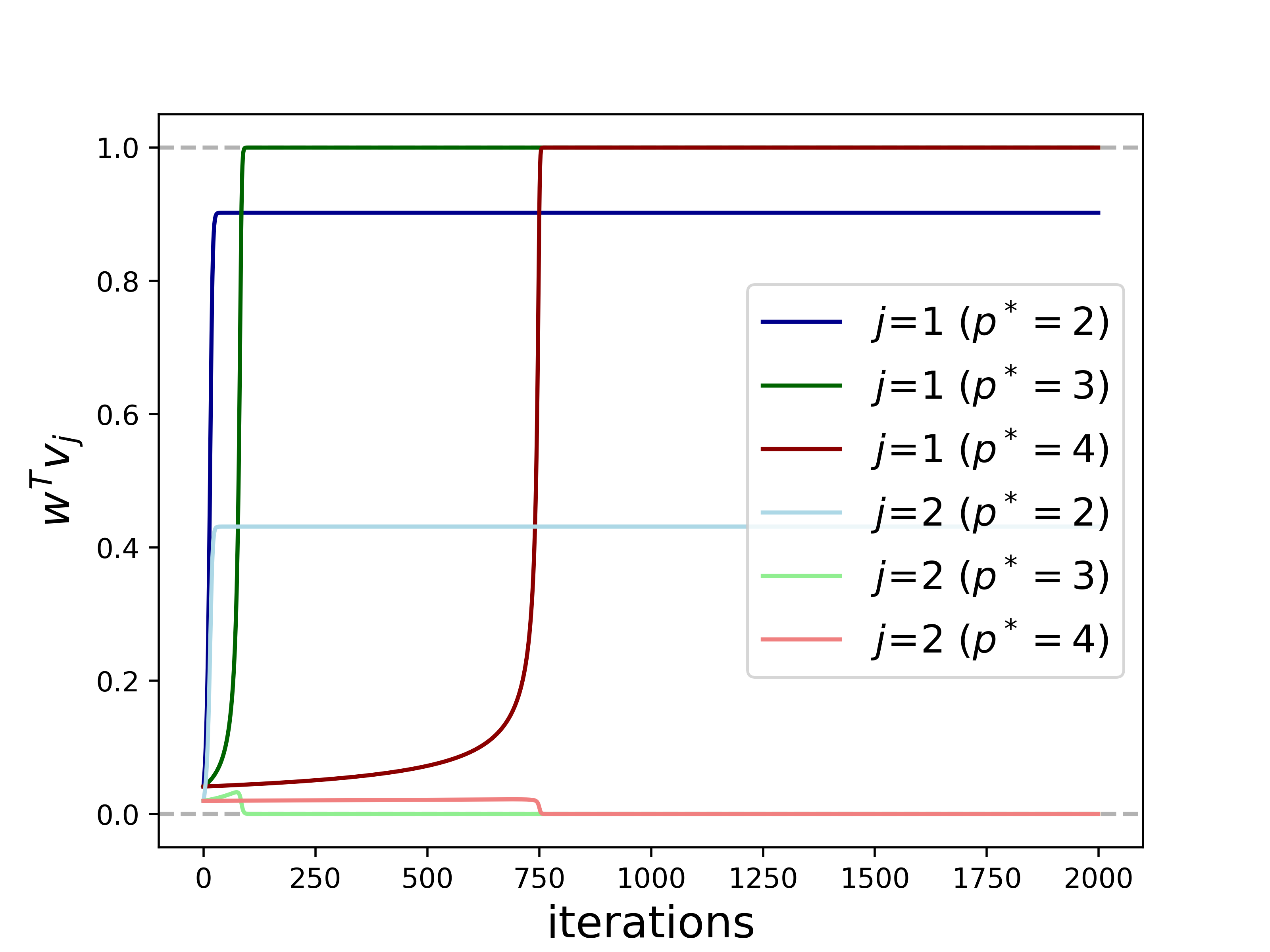}
     \caption{
     \textit{Dot products during training; $n=1, d=1000$.} 
    We run gradient descent by updating the unit norm vector with the spherical gradient using a learning rate $\eta=0.1$ and normalizing the vector after the update. 
    Here $\sigma^* = h_{p^*}$ and $k=2$. 
    For $p^* \! \in \! \{3,4\}$, the unit vector (neuron) converges in the direction of the nearest index vector at initialization ($j \!=\! 1$). 
    For $p^* \!=\! 2$, a linear combination of the two directions is learned due to rotational symmetry. 
    The maximum dot product reaches a constant value in a longer timescale when the information exponent is bigger.}
    \label{fig:n1-simulations}
\end{figure}

\begin{proposition}[Directional Convergence]\label{prop:dir-converge} 
Let $\ell \in [k]$ and $I = [k] \setminus [\ell] $. 
Assume that $\bw(0)^T \bv_j > 0$ for all $j \in [k]$ and that these dot products are ordered as $\bw(0)^T \bv_1 = \dotsb = \bw(0)^T \bv_\ell > \max_{j \in I} \bw(0)^T \bv_j $ wlog. 
For $\sigma^* = h_{p^*}$, $p^* \geq 3$, one gets
\[
    \lim_{t \to \infty} \bw(t) \!=\! \frac{1}{\sqrt{\ell}} \sum_{j=1}^\ell \bv_j.
\]
Furthermore, if $\sigma^*$ is even, one can drop the assumption that the dot products are positive at initialization. 
Assume that the magnitudes of the dot products are ordered as $|\bw(0)^T \bv_1| = \dotsb = |\bw(0)^T \bv_\ell| > \max_{j \in I} |\bw(0)^T \bv_j|$ wlog. In this case, one gets 
\[
    \lim_{t \to \infty} \bw(t) = \frac{1}{\sqrt{\ell}} \sum_{j=1}^\ell \operatorname{sgn}(\bw(0)^T \bv_j) \bv_j.  
\]
\end{proposition}

A random initialization breaks ties between any pair of dot products almost surely, hence, we conclude that the single neuron converges to the nearest index vector. 

\begin{proof}[Proof sketch] Let us sketch the proof for the case of general $\sigma^*$ and positive dot products at initialization. 
In a nutshell, we identify the following Lyapunov functions
\[
    \Delta_{1j}(t) = \bw(t)^T \bv_1 - \bw(t)^T \bv_j \quad \text{for} \ \ j \neq 1
\]
and show that
\begin{align}\label{eq:invariance}
    \frac{d}{dt} \Delta_{1j}(t) \geq 0 \quad \text{given} \quad \Delta_{1j}(0) > 0 
\end{align}
using some inequalities. 
This invariance~\ref{eq:invariance} implies that $\Delta_{1j}(t) = \bw(t)^T \bv_1 - \bw(t)^T \bv_j > 0$ for all times. 
The only fixed point breaking ties between $\bv_1$ and $\bv_j$'s, hence, satisfying the above condition is $\bw = \bv_1$.
We conclude convergence to the fixed point by the stable manifold theorem. 
Furthermore, one can notice the following 
\[
    \Delta_{1j}(t) = 0, \quad \text{given} \quad \Delta_{1j}(0) = 0
\]
due to the symmetries. 
This implies convergence to a mixed fixed point, which computes a normalized average of the index vectors that all received the equal maximum dot product at initialization. 
\end{proof}

The proof features that $\bw(t)^T (\bv_1 - \bv_j)$ is non-decreasing over time, ensuring that $\bw$ moves gradually toward $\bv_1$. 
The full proof is given in the Appendix~\ref{sec:directional-convergence-app}.

\begin{remark}[$p^*=2$]\label{rmk:rotational-sym} Proposition~\ref{prop:dir-converge} also applies when $p^* = 2$ as long as there is another Hermite polynomial in the expansion without a non-zero coefficient, i.e., $a_p b_p > 0$ for some $p > 2$. 
The special case $\sigma^*(x) = h_2(x)$ generates the loss function $L(\bw) = C - a_2 b_2 \sum_{j=1}^k (\bw^T \bv_j)^2 = C - a_2 b_2 \| V^T \bw \|_2^2 $. 
In this case, any $\bw = V \bu$ for some $\| \bu \| = 1$ has a unit norm and minimizes the loss function.  
\end{remark}

\begin{remark}[$p^*=1$]\label{rmk:linear-comp}
When the target activation function has a linear component, i.e., $p^*=1$, then $\bv^\perp$ is not a fixed point, hence, the gradient flow initialized randomly on the sphere does not suffer from the time complexity of escaping a fixed point. 
Moreover, the monotonic behavior of the Lyapunov function is no longer guaranteed in this setting, preventing us from characterizing infinite-time behavior. 
Finally, the correspondence with the tensor eigenvector problem is no longer valid, hence, preventing us from characterizing the fixed points with the same method. 
\end{remark}

\paragraph{Comparison to Tensor Decomposition}
Proposition~\ref{prop:fixed-points} established an exact correspondence between the fixed points of the dynamical system and the eigenvectors of the corresponding tensor for the orthonormal frame, for the case $\sigma^* \!=\! h_{p^*}$.
We now informally compare the behavior of the gradient flow with the power iteration method. 
Recall that the $p^*$-th order tensor is given by
\[
    T \!=\! \sum_{j=1}^{k} \bv_j^{\otimes p^*}.
\]
\citet[Theorem 4.1]{anandkumar2014tensor} showed that the power iteration algorithm converges to the set $\{ \bv_1, \dots, \bv_k \}$ almost surely, escaping the other eigenvectors corresponding to the partial averages of the index vectors. 
Hence, the power iteration algorithm succeeds in finding the unique decomposition of the orthonormal tensor by finding one eigenvector, subtracting it from the tensor, and iterating this procedure until reaching zero approximation error. 
In this paper, we show that gradient flow performs similarly, where each neuron converges to the nearest index vector. 
One advantage of gradient flow is that neurons can move in parallel as opposed to finding the eigenvectors sequentially in tensor decomposition. 
However, this parallelization comes with a small cost of $\ln(k)$ factor of neurons to ensure the picking of each index vector, which is what we study in the next Subsection. 

\subsection{Mild Overparameterization Works}
\label{sec:gflow-mild-OP}

Because we study a fully decoupled model~\ref{prob:multi-neuron}, the single-neuron dynamics studied in Subsection~\ref{sec:time-complexity} and Subsection~\ref{sec:dir-convergence} fully describes the collective behavior of neurons. 
The remaining question is whether random initialization of multiple neurons ensures their directional convergence to the full set of index vectors. 

We use Proposition~\ref{prop:dir-converge} and a classic matching argument to ensure that a mild overparameterization of a logarithmic factor works. 
More precisely, it suffices to ensure that all of the $k$ directions $\bv_1,\dots,\bv_k$ are ``collected'' (in the coupon-collecting sense) by the $n$ student neurons at initialization.
Say $\bv_j$ is \emph{collected} by $\bw_i$ (at initialization) if
\begin{equation*}
    \bw_i(0)^T \bv_j > \max_{j' \neq j} \ \bw_i(0)^T \bv_{j'}.
\end{equation*}
By Proposition~\ref{prop:dir-converge}, if $\bv_j$ is collected by $\bw_i$ at initialization, then $\bw_i(t) \to \bv_j$ as $t\to\infty$.
By symmetry of the random initialization, for any $i \in [n]$ and $j \in [k]$,
\begin{equation*}
    P_{\text{collect}} := \Pr[ \text{$\bv_j$ is collected by $\bw_i$} ] = 1/k.
\end{equation*}
Therefore, by independence of the initialization and a union bound,
\begin{align}
    &\Pr[ \text{$\exists j \in [k]$ s.t.\ $\bv_j$ is not collected by any $\{\bw_i\}_{i \in [n]}$} ]
    \notag \\
    &\leq k (1-P_{\text{collect}})^n \leq k \exp(-P_{\text{collect}}n).
\end{align}
This failure probability bound is less than $1/k^{\epsilon}$ when $n \geq (1+\epsilon)k\ln k$ for any $\epsilon > 0$. 
Hence, a mild overparameterization of $\ln(k)$ factor is sufficient for matching the neurons to the index vectors when the index vectors are orthogonal to each other. 

We can also easily get the following lower-bound
\begin{align}
    &\Pr[ \text{$\exists j \in [k]$ s.t.\ $\bv_j$ is not collected by any $\{\bw_i\}_{i \in [n]}$} ]
    \notag \\
    &\! \geq (1-P_{\text{collect}})^n \to \exp(-\gamma) \ \text{as} \ k \to \infty, \ \gamma = n/k.
\end{align}
The failure probability is more than $\exp(-\gamma)$ in the proportional limit when $n = \gamma k$ for any constant $\gamma$, hence using the correlation loss does not guarantee the success of gradient flow in matching neurons to the index vectors when using a constant factor of overparameterization.
However, using only a constant factor of overparameterization seems to be sufficient when using the MSE loss (discussed informally in Section~\ref{sec:simulations}).

\section{Saddle-to-Minimum Transition}
\label{sec:saddle-to-min}

In Subsection~\ref{sec:dir-convergence}, we proved that a single neuron converges to the nearest index vector when the index vectors form an orthonormal frame. 
Generalization of this result to nearly orthogonal index vectors is established by \citet{oko2024learning} when $k$ grows together with $d$. 
However, it is not clear whether this qualitative behavior persists or undergoes a dramatic change when the index vectors approach each other arbitrarily.   

In Section~\ref{sec:saddle-to-min}, we study the effect of index vectors moving from an orthogonal frame towards each other. 
For analytic tractability, we consider the scenario in which the index vectors form an equiangular frame with a non-negative dot product, that is, $\bv_i^T \bv_j = \beta$ for all $i \neq j \in [k]$, for some $\beta \in [0, 1]$ (Assumption~\ref{ass:equi-frame}). 

Specifically, we focus on the local geometry of the point that computes the average of the index vectors 
\[
    \bar{\bw} = \frac{1}{\| \sum_{j=1}^k \bv_j \|_2} \sum_{j=1}^k \bv_j.
\]

\begin{lemma} $\bar{\bw}$ is a fixed point of the dynamical system~\eqref{prob:single-neuron} when the index vectors form an equiangular frame. 
\end{lemma}

\begin{proof} $\bw$ is a fixed point if and only if $\nabla L (\bw) = \lambda \bw $. 
Let us write the Euclidean gradient explicitly:
\[
    \nabla L (\bw) = - \sum_{p \geq p^*} a_p b_p p \sum_{j=1}^k (\bw^T \bv_j)^{p-1} \bv_j ,
\]
and plug in $\bar{\bw}$:
\[
    \nabla L (\bw) \bigl|_{\bw = \bar{\bw}} =  - \sum_{p \geq p^*} a_p b_p p \sum_{j=1}^k (\bar{\bw}^T \bv_j)^{p-1} \bv_j.
\]
Observe that $\bar{\bw}^T \bv_j $ is the same for all $j$ since the dot products between the index vectors are the same. 
Hence we can push the constant term $(\bar{\bw}^T \bv_j)^{p-1}$ outside of the summation and conclude that $\nabla L(\bar{\bw})$ is parallel to $\bar{\bw}$.
\end{proof}

Intuitively, the correlation between the predictor $\smash{f(\bx) = \sigma(\bar{\bw}^T \bx)}$ and the multi-index function increases as the index vectors approach each other, as $\bar{\bw}$ would get closer to the index vectors. 
To formalize this, we need to make $\beta$ explicit in the loss function. 
Indeed, the evaluation of the loss at $\bar{\bw}$ is given by the two instances (i) orthogonal, $\beta=0$, (ii) single-index, $\beta=1$: 
\begin{align*}
    L_\beta (\bar{\bw}) \bigl|_{\beta=0} &= C - \sum_{p \geq p^*} c_p \Bigl(\frac{1}{k}\Bigr)^{p/2} k, \\ L_\beta (\bar{\bw}) \bigl|_{\beta=1} &= C - \sum_{p \geq p^*} c_p k
\end{align*}
where $L_\beta(\bar{\bw})$ is $L(\bar{\bw})$ when the index vectors form an equiangular frame with dot product $\bv_i^T \bv_j = \beta$. 
Indeed $L_\beta (\bar{\bw}) \bigl|_{\beta=0} > L_\beta (\bar{\bw}) \bigl|_{\beta=1}$, and $L_\beta (\bar{\bw})$ decreases as $\beta$ increases (see Appendix for the calculations).

In the proportional limit $k = c d$ with $c \in (0, 1)$ and as $d \to \infty$, $L_\beta(\bar{\bw})$ approaches $C$, hence learning $\bar{\bw}$ is as bad as not learning anything. 
Whereas for $\beta = 1$, we have $\bar{\bw} = \bv_1 = \dots = \bv_k$, all fixed points collapse on each other, and $\bar{\bw}$ is the optimal. 

More generally, it is important to characterize whether $\bar{\bw}$ is a strict saddle or a local minimum. 
To do so, one needs to study how the curvature of $L_\beta(\bar{\bw})$ changes as $\beta$ increases. 
In Theorem~\ref{thm:saddle-min}, we sharply characterize the sign change in the curvature for polynomial target activation $\sigma^*$ of degree $P$ and inf. exponent $p^*$. 

\begin{theorem}[Saddle-to-Minimum]\label{thm:saddle-min} Assume $d > k$. 
$\bar{\bw}$ is a strict saddle when the dot product is upper bounded by 
\[
    \beta < \frac{p^*-2}{k+p^*-2},
\]
whereas $\bar{\bw}$ is a local minimum when the dot product is lower bounded by 
\[
    \frac{P-2}{k+P-2} < \beta. 
\]
Therefore, for the case $\sigma^* \!=\! h_{p^*}$
\[
    \beta_c = \frac{p^*-2}{k+p^*-2}
\]
is the sharp threshold characterizing the transition from a strict saddle to a local minimum. 
\end{theorem}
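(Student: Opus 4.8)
The plan is to reduce the statement to computing the spherical (Riemannian) Hessian of $L$ at $\bar{\bw}$ and reading off the sign of its smallest eigenvalue. Since the preceding lemma shows $\bar{\bw}$ is a critical point of $L$ restricted to the sphere, we may write $\nabla L(\bar{\bw}) = \lambda\bar{\bw}$ with $\lambda = \bar{\bw}^T\nabla L(\bar{\bw})$, and then the Riemannian Hessian is the symmetric operator $\Pi\bigl(\nabla^2 L(\bar{\bw}) - \lambda I\bigr)\Pi$ on the tangent space $T = \bar{\bw}^{\perp}$, where $\Pi = I - \bar{\bw}\bar{\bw}^T$. First I would record the relevant scalars: under~\eqref{eq:dot-product} one gets $\bar{\bw}^T\bv_j = s$ for every $j$ with $s^2 = (1+(k-1)\beta)/k$, and, differentiating the expansion~\eqref{eq:corr-loss} termwise (valid near $\bar{\bw}$ since every coordinate $\bv_j^T\bar{\bw} = s > 0$), $\lambda = -k\sum_{p\ge p^*} c_p\,p\,s^{p} < 0$ while $\nabla^2 L(\bar{\bw}) = -\mu\,VV^T$ with $\mu := \sum_{p\ge p^*} c_p\,p(p-1)\,s^{p-2} > 0$.

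Next I would diagonalize. Put $W = \mathrm{span}(\bv_1,\dots,\bv_k)$; since $\bar{\bw}\in W$ and $d > k$, the tangent space splits orthogonally as $T = \bigl(W\cap\bar{\bw}^{\perp}\bigr)\oplus W^{\perp}$, of dimensions $k-1$ and $d-k$. On $W^{\perp}$ we have $VV^T = 0$, so the Riemannian Hessian there equals $-\lambda I \succ 0$. On $W\cap\bar{\bw}^{\perp}$ I would use the eigenstructure of $A = V^TV$ under~\eqref{eq:dot-product}: $A$ has eigenvalue $1+(k-1)\beta$ on $\bone$ (the direction of $\bar{\bw}$) and eigenvalue $1-\beta$ on $\bone^{\perp}$; since $A\mathbf{c} = \nu\mathbf{c}$ with $\nu\neq 0$ implies $VV^T(V\mathbf{c}) = \nu\,V\mathbf{c}$, the operator $VV^T$ acts as the scalar $1-\beta$ on $W\cap\bar{\bw}^{\perp} = \{V\mathbf{c} : \mathbf{c}\perp\bone\}$. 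Hence on this $(k-1)$-dimensional subspace the Riemannian Hessian is the single scalar $-\mu(1-\beta) - \lambda$, and this is the only place instability can occur.

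Substituting $ks^2 = 1+(k-1)\beta$ collapses the whole question to
\[
  -\mu(1-\beta)-\lambda \;=\; \sum_{p\ge p^*} c_p\,p\,s^{p-2}\,\bigl[(2-p)+\beta(k+p-2)\bigr],
\]
whose $p$-th bracket is negative precisely when $\beta < (p-2)/(k+p-2)$; crucially $(p-2)/(k+p-2)$ is strictly increasing in $p$. Thus if $\beta < (p^*-2)/(k+p^*-2)$ every bracket with $p\ge p^*$ is negative and, since $c_{p^*}>0$, the sum is strictly negative, giving a direction of negative curvature, so $\bar{\bw}$ is a strict saddle. If $\beta > (P-2)/(k+P-2)$ then for every $p$ with $c_p\neq 0$ (i.e.\ $p^*\le p\le P$) the bracket is positive, the sum is strictly positive, and together with the eigenvalue $-\lambda>0$ on $W^{\perp}$ the Riemannian Hessian is positive definite, so $\bar{\bw}$ is a strict local minimum. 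When $\sigma^* = h_{p^*}$ we have $P = p^*$, the two thresholds coincide, and $\beta_c = (p^*-2)/(k+p^*-2)$ is exactly the transition point.

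The step requiring the most care is the reduction above rather than the final sign count: one must check that $\bar{\bw}$ remains a critical point for the full series (the preceding lemma), that the differentiated series of~\eqref{eq:corr-loss} converges uniformly near $\bar{\bw}$ so that $\nabla^2 L(\bar{\bw}) = -\mu VV^T$ is valid — immediate when $\sigma^*$ is a polynomial, and otherwise following from an Assumption~\ref{ass:series-convergence}–type bound together with $|\bv_j^T\bw|\le\lambda_{\max}(A)^{1/2}$ on the loss domain — and that the eigenspace bookkeeping for $VV^T$ versus $A$ is exact, so that the candidate unstable directions are precisely $W\cap\bar{\bw}^{\perp}$ with one common eigenvalue. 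Granting that, the remaining computations (the value of $s$, the constants $\lambda$ and $\mu$, and the algebraic simplification of the bracket) are routine and deferred to the appendix.
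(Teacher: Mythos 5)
Your proposal is correct and follows essentially the same route as the paper's proof: both reduce to the same quadratic form at $\bar{\bw}$ (your projected Hessian $\Pi(\nabla^2 L-\lambda I)\Pi$ coincides with the paper's second derivative along great circles $\bar{\bw}\cos\theta+\bv\sin\theta$), both exploit the eigenstructure of $A$ and $VV^T$ to get the value $1-\beta$ on $W\cap\bar{\bw}^{\perp}$ and positivity on $W^{\perp}$ (using $d>k$), and both arrive at the per-$p$ bracket condition $\beta\lessgtr(p-2)/(k+p-2)$. Your explicit block-diagonalization is a slightly cleaner packaging of the paper's matching upper/lower bounds on $\|V^T\bv\|_2^2$, but it is not a substantively different argument.
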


We now give some intuition for the saddle-to-minimum transition phenomenon.  
Each index vector attracts the single neuron towards itself. 
One can decompose this attraction gradient in terms of the attraction toward $\bar{\bw}$ and the attraction toward $\bv_j - \bar{\bw}$ for $\beta > 0$.
As $\beta$ increases, the projection of gradient onto $\bar{\bw}$ increases, hence, gradually turning it from a saddle point to a minimum. 

The full proof is given in the Appendix~\ref{sec:app-saddle-to-min}.

\cite{muller2022robust} studied the tensor decomposition problem when the component vectors form an equiangular frame, focusing on the regime when the component vectors are robust eigenvectors under the tensor power iteration algorithm. 
Our Theorem~\ref{thm:saddle-min} shows that the average fixed point turns into a local minimum when $\beta$ exceeds a certain threshold of order $1/k$. 

This suggests that there may be another threshold $\beta_f$ such that the average fixed point is not only a minimizer but also attractive under gradient flow (see Fig~\eqref{fig:bifurcation}). 
Our result indicates a hard regime for tensor decomposition as well as neural networks--which was not studied before to our knowledge.  

\begin{figure}[t]
  \centering
    \includegraphics[width=0.45\textwidth]{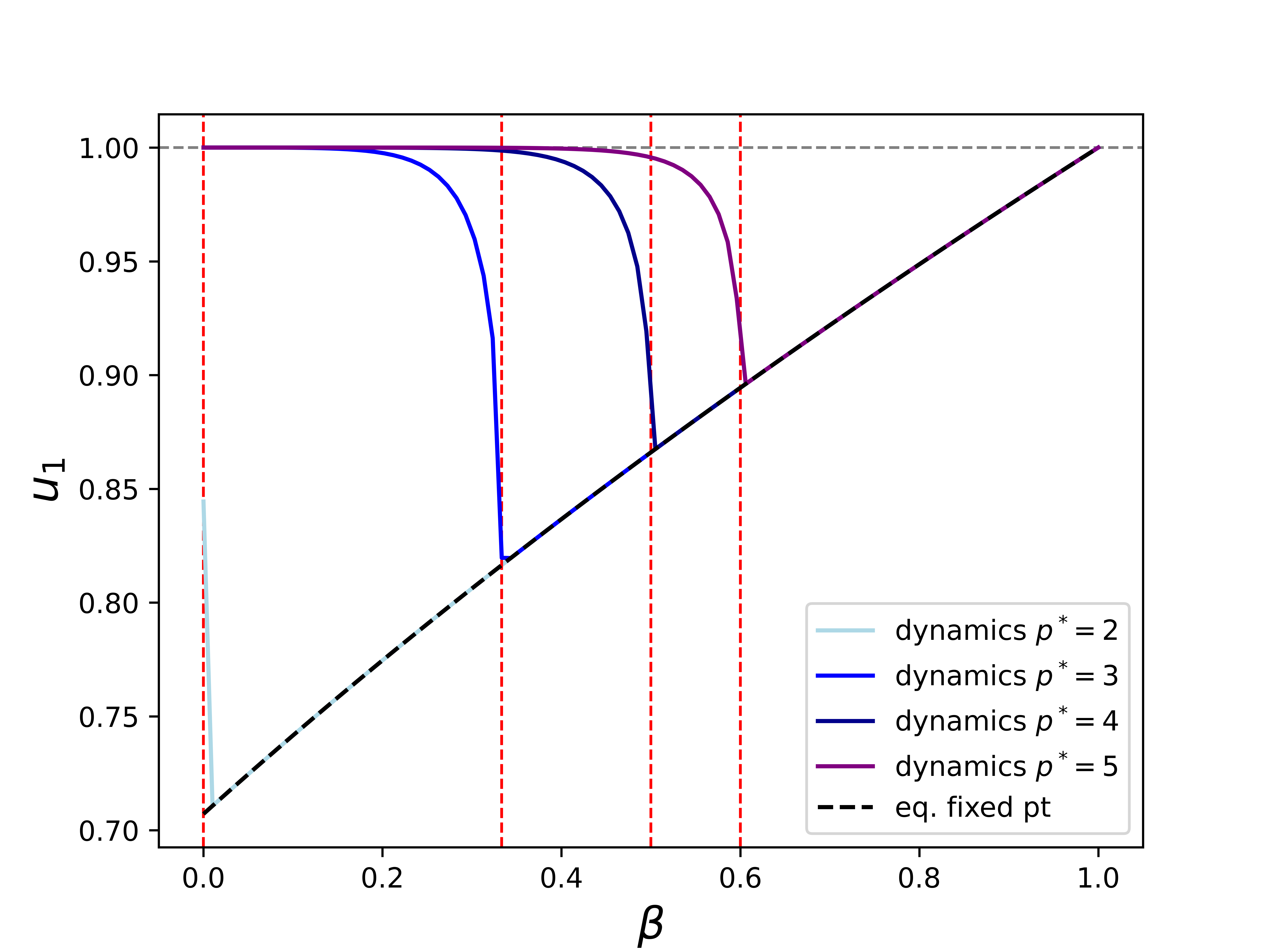}
  \centering
  \vspace{-0.25cm}
  \caption{\textit{Index vectors forming an equiangular frame with an equal dot product $\beta$, bifurcation diagram; $k=2$,  $\sigma^*=h_{p^*}$.} 
  The infinite-time behavior of the student vector abruptly changes from monotonic convergence to the nearest direction to convergence to the average of directions (non-monotonically) at a critical value $\beta_f \!\in\! (0,1)$. 
  The red dashed line indicates the saddle-to-minimum threshold $\beta_c$ given in Theorem~\ref{thm:saddle-min}. 
  Observe the small gap between $\beta_c$ and $\beta_f$. 
  } \label{fig:bifurcation}
\end{figure}

\section{Discussion}
\label{sec:discussion}

\paragraph{Limitations}
We acknowledge the following simplifications used in our model compared to the literature:
\begin{itemize}
\item The target function is assumed to be a sum of single-index models (teacher) as opposed to a general multi-index model \citep{chen2020learning, troiani2024fundamental}. 
This allows us to study the infinite-time behavior of the dynamical system when learning with gradient flow, which uses the permutation symmetry inherent in the teacher model. 
It is unclear if this type of fine end-to-end dynamical system analysis is possible for general multi-index models.  
\item The single-index components are equivalent to each other in the sense that the index vectors form an equiangular frame \text{and} the second-layer weights are fixed at one. 
Due to symmetries, the dynamical system could be reduced to a two-dimensional system, in principle. 
We argue that even the behavior can get more complicated for finite but higher-dimensional dynamical systems. 
\item We use correlation loss instead of the MSE loss. The strong interactions between the neurons when using the MSE loss make the analysis quite more complicated in the finite-width setting. 
Whether using the MSE loss brings provable improvements in time complexity and overparameterization are both intriguing questions for future work.
\item The input distribution is assumed to be standard Gaussian with identity covariance. This is a rotationally symmetric distribution where no direction is informative. Extensions to (i) covariance matrix that is identity + rank-one \citet{ba2023learning} and (ii) the uniform distribution on the sphere \citet{bruna2023single} are studied in the literature however only for single-index models. Whether such analysis could be generalized to multi-index models is an interesting avenue for future work. 
\end{itemize}

\section{Numerical Simulations}
\label{sec:simulations}

Some simulations are reported in the scenarios below:
\begin{itemize}
    \item orthogonal index vectors, compare the behavior of correlation loss and MSE loss, in terms of the benefit of MSE loss in decreasing the failure probability for insufficiently overparameterized neural networks or unlucky initializations (Fig~\eqref{fig:neuron-allocation}),
    \item orthogonal index vectors, compare the behavior of correlation loss and MSE loss, in terms of the loss curves, time complexity, and gradient flow trajectories for different activation functions (Fig~\eqref{fig:loss-curves}).
\end{itemize}

\begin{figure}[h]
  \centering
    \includegraphics[width=0.5\textwidth]{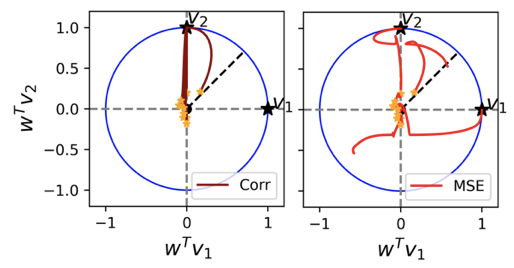}
  \centering
  \vspace{-0.5cm}
  \caption{\textit{MSE loss helps with neuron allocation; fixed initialization in both figures, $k \!=\! 2$, $n \!=\! 10$.} 
  If no neuron at initialization is closest to one of the index vectors, gradient flow fails to find it when using the correlation loss (\textit{left panel}) whereas the MSE loss fixes this issue thanks to the repulsion between neurons (\textit{right panel}).
  } \label{fig:neuron-allocation}
\end{figure}

In particular, using a factor $\gamma=4$ of overparameterization is recommendable to push the lower bound of the failure probability down to $\exp(-4) \approx 0.0183 $.
Interestingly, the factor $4$ is consistent with the numerical simulations of \cite{martinelli2023expand} for learning two-layer neural networks with gradient flow when using the MSE loss, for a large family of toy problems where the index vectors exhibit arbitrary geometries. 
In Fig.~\ref{fig:loss-curves}, we observe that MSE loss behaves similarly to correlation loss early in training when there is no linear and no quadratic component. 
Similar behavior is also observed for neural networks with ReLU activation, where early training dynamics for the MSE loss is approximated by the correlation loss \citep{maennel2018gradient, boursier2022gradient}. 

\begin{figure}[t]
     \centering
     {\includegraphics[width=0.23\textwidth]{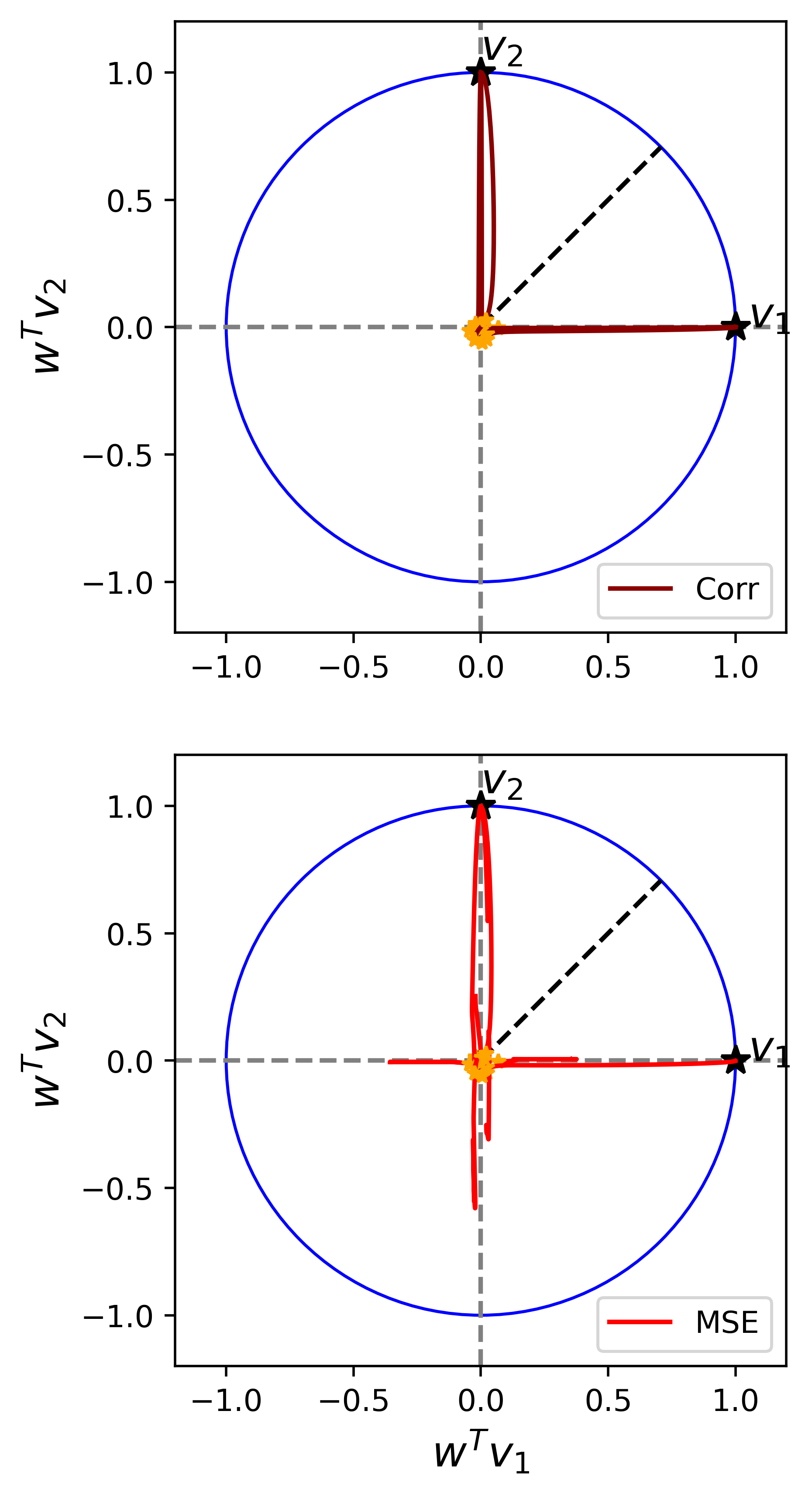}}
     {\includegraphics[width=0.23\textwidth]{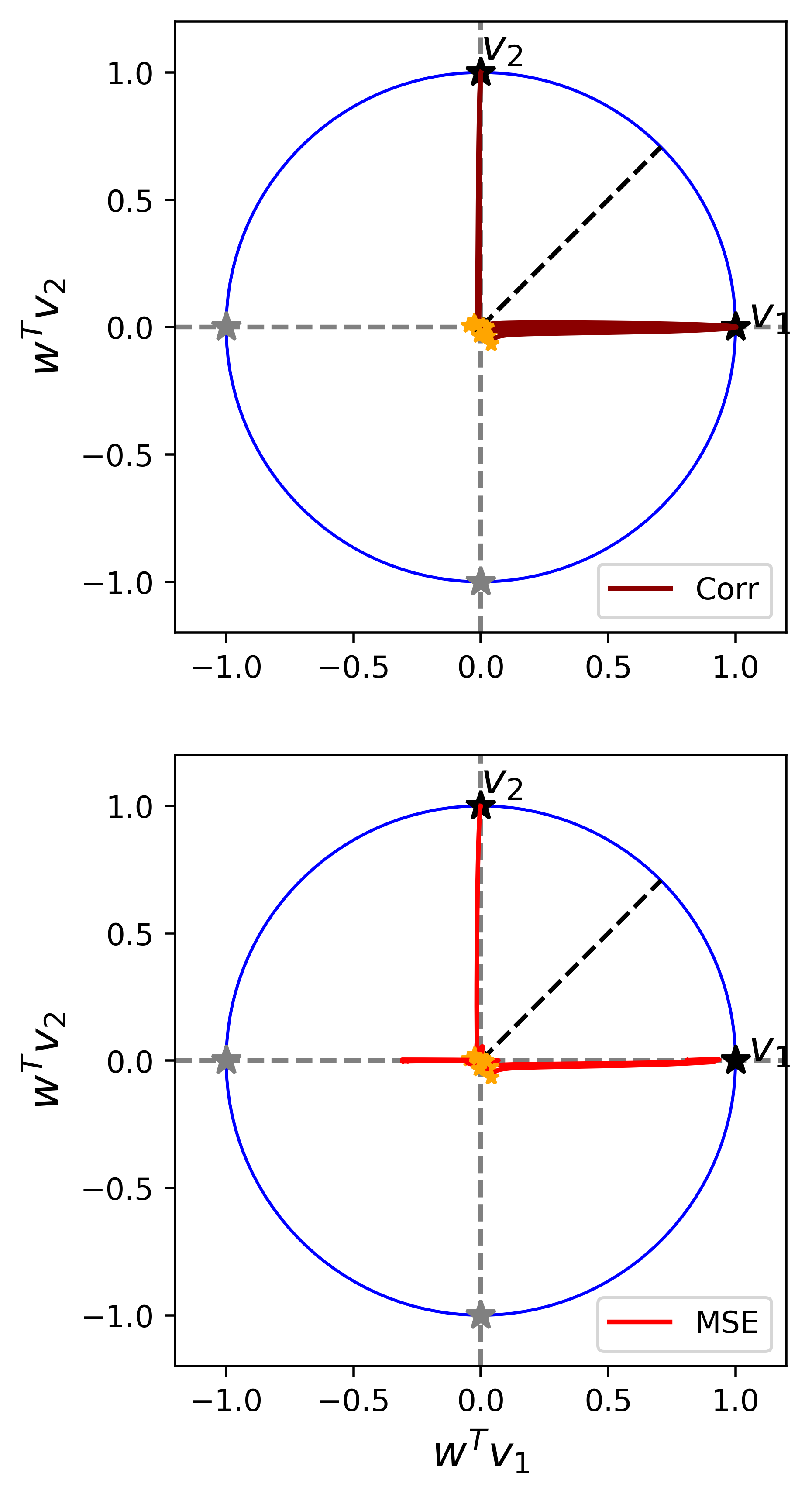}}  \\
      \vspace{-0.3cm}
     \subfloat[{\small $\sigma^* \!=\! h_3$}]
     {\includegraphics[width=0.23\textwidth]{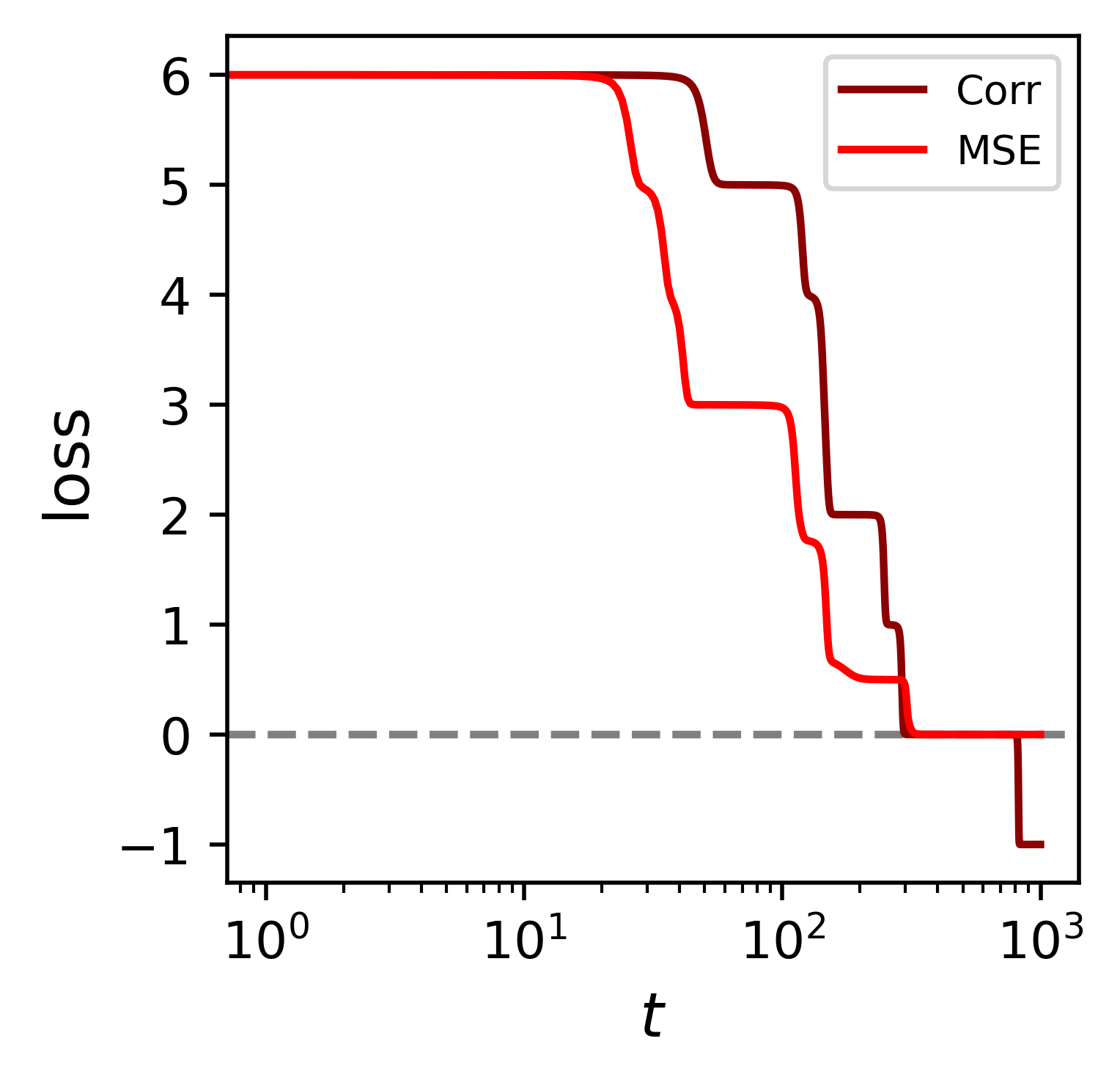}}
      \vspace{-0.3cm}
     \subfloat[{\small $\sigma^* \!=\! h_4$}]
     {\includegraphics[width=0.23\textwidth]{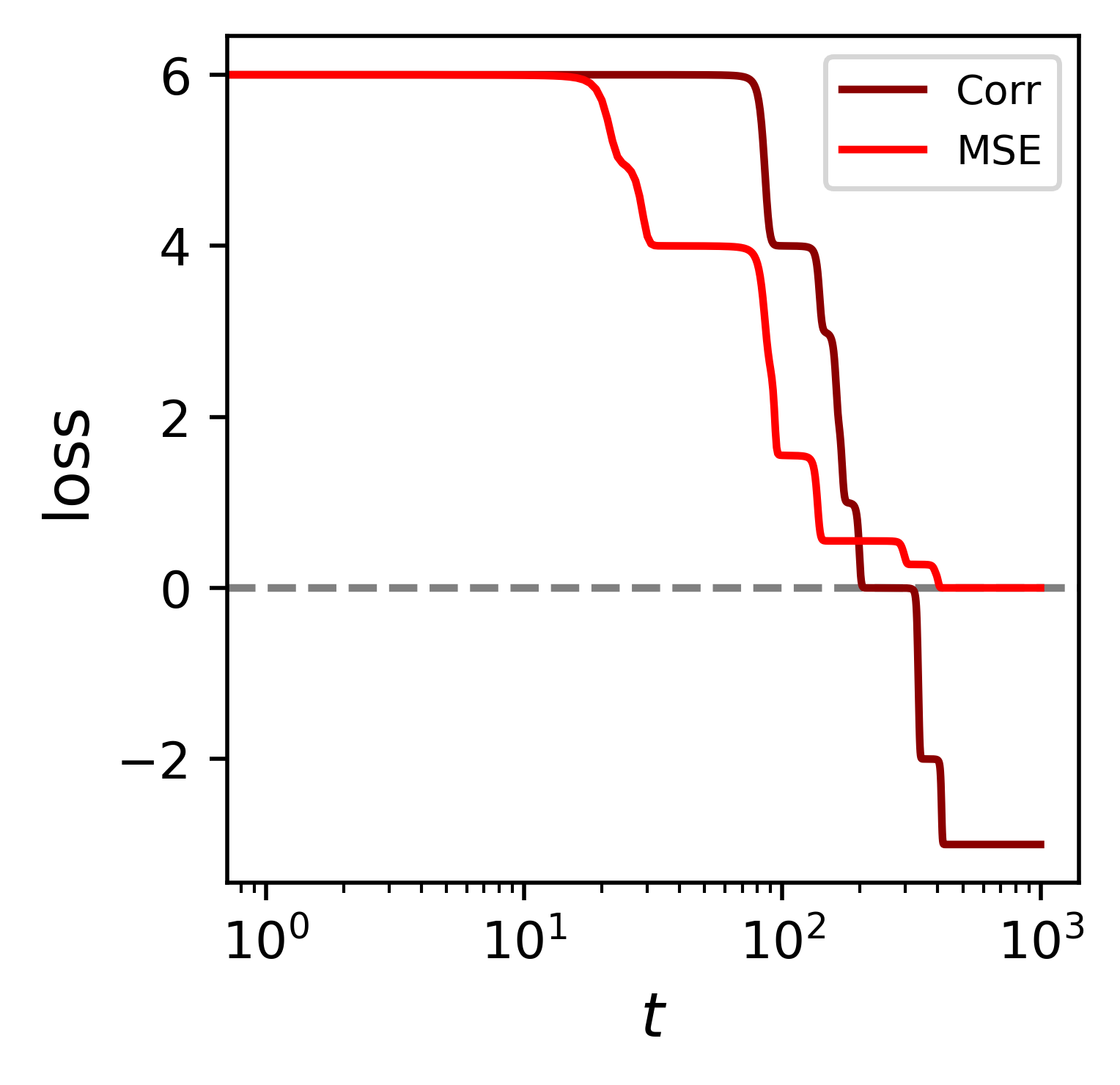}}
    \caption{\textit{Gradient flow trajectories projected to the subspace of orthogonal index vectors for MSE and correlation losses and loss curves; odd activation (a), even activation (b).} 
    Initialization and the number of neurons are fixed; $k \!=\! 2$, $n \!=\! 10$, $d \!=\! 1000$. 
    The winning neurons move toward the closest index vectors also for the MSE loss, but the other neurons move non-trivially due to interactions between them. 
    Adding the repulsion term (MSE loss) virtually decreases the time complexity (bottom row), however, the improvement in the time complexity may be only up to a constant factor.}
    \label{fig:loss-curves}
\end{figure} 

\subsubsection*{Acknowledgements}

BS acknowledges useful discussions with Loucas Pillaud-Vivien and Alberto Bietti in the early phases of this work, and the valuable feedback from Denny Wu and Ludovic Stephan on the manuscript. 
DH acknowledges support from the ONR under grant N00014-24-1-2700.

\newpage

\bibliography{references}

\begin{thebibliography}{36}
\providecommand{\natexlab}[1]{#1}
\providecommand{\url}[1]{\texttt{#1}}
\expandafter\ifx\csname urlstyle\endcsname\relax
  \providecommand{\doi}[1]{doi: #1}\else
  \providecommand{\doi}{doi: \begingroup \urlstyle{rm}\Url}\fi

\bibitem[Ba et~al.(2022)Ba, Erdogdu, Suzuki, Wang, Wu, and Yang]{ba2022high}
Jimmy Ba, Murat~A Erdogdu, Taiji Suzuki, Zhichao Wang, Denny Wu, and Greg Yang.
\newblock High-dimensional asymptotics of feature learning: How one gradient
  step improves the representation.
\newblock \emph{Advances in Neural Information Processing Systems},
  35:\penalty0 37932--37946, 2022.

\bibitem[Bietti et~al.(2022)Bietti, Bruna, Sanford, and
  Song]{bietti2022learning}
Alberto Bietti, Joan Bruna, Clayton Sanford, and Min~Jae Song.
\newblock Learning single-index models with shallow neural networks.
\newblock \emph{Advances in Neural Information Processing Systems},
  35:\penalty0 9768--9783, 2022.

\bibitem[Barak et~al.(2022)Barak, Edelman, Goel, Kakade, Malach, and
  Zhang]{barak2022hidden}
Boaz Barak, Benjamin Edelman, Surbhi Goel, Sham Kakade, Eran Malach, and Cyril
  Zhang.
\newblock Hidden progress in deep learning: Sgd learns parities near the
  computational limit.
\newblock \emph{Advances in Neural Information Processing Systems},
  35:\penalty0 21750--21764, 2022.

\bibitem[Damian et~al.(2022)Damian, Lee, and Soltanolkotabi]{damian2022neural}
Alexandru Damian, Jason Lee, and Mahdi Soltanolkotabi.
\newblock Neural networks can learn representations with gradient descent.
\newblock In \emph{Conference on Learning Theory}, pages 5413--5452. PMLR,
  2022.

\bibitem[Abbe et~al.(2023)Abbe, Adsera, and Misiakiewicz]{abbe2023sgd}
Emmanuel Abbe, Enric~Boix Adsera, and Theodor Misiakiewicz.
\newblock Sgd learning on neural networks: leap complexity and saddle-to-saddle
  dynamics.
\newblock In \emph{The Thirty Sixth Annual Conference on Learning Theory},
  pages 2552--2623. PMLR, 2023.

\bibitem[Berthier et~al.(2023)Berthier, Montanari, and
  Zhou]{berthier2023learning}
Rapha{\"e}l Berthier, Andrea Montanari, and Kangjie Zhou.
\newblock Learning time-scales in two-layers neural networks.
\newblock \emph{arXiv preprint arXiv:2303.00055}, 2023.

\bibitem[Bietti et~al.(2023)Bietti, Bruna, and
  Pillaud-Vivien]{bietti2023learning}
Alberto Bietti, Joan Bruna, and Loucas Pillaud-Vivien.
\newblock On learning gaussian multi-index models with gradient flow.
\newblock \emph{arXiv preprint arXiv:2310.19793}, 2023.

\bibitem[Glasgow(2023)]{glasgow2023sgd}
Margalit Glasgow.
\newblock Sgd finds then tunes features in two-layer neural networks with
  near-optimal sample complexity: A case study in the xor problem.
\newblock \emph{arXiv preprint arXiv:2309.15111}, 2023.

\bibitem[Oko et~al.(2024)Oko, Song, Suzuki, and Wu]{oko2024learning}
Kazusato Oko, Yujin Song, Taiji Suzuki, and Denny Wu.
\newblock Learning sum of diverse features: computational hardness and
  efficient gradient-based training for ridge combinations.
\newblock In \emph{The Thirty Seventh Annual Conference on Learning Theory},
  pages 4009--4081. PMLR, 2024.

\bibitem[Ben~Arous et~al.(2024)Ben~Arous, Gerbelot, and Piccolo]{arous2024high}
G{\'e}rard Ben~Arous, C{\'e}dric Gerbelot, and Vanessa Piccolo.
\newblock High-dimensional optimization for multi-spiked tensor pca.
\newblock \emph{arXiv preprint arXiv:2408.06401}, 2024.

\bibitem[Damian et~al.(2023)Damian, Nichani, Ge, and Lee]{damian2023smoothing}
Alex Damian, Eshaan Nichani, Rong Ge, and Jason~D Lee.
\newblock Smoothing the landscape boosts the signal for sgd: Optimal sample
  complexity for learning single index models.
\newblock \emph{Advances in Neural Information Processing Systems},
  36:\penalty0 752--784, 2023.

\bibitem[Dandi et~al.(2024)Dandi, Troiani, Arnaboldi, Pesce, Zdeborov{\'a}, and
  Krzakala]{dandi2024benefits}
Yatin Dandi, Emanuele Troiani, Luca Arnaboldi, Luca Pesce, Lenka Zdeborov{\'a},
  and Florent Krzakala.
\newblock The benefits of reusing batches for gradient descent in two-layer
  networks: Breaking the curse of information and leap exponents.
\newblock \emph{arXiv preprint arXiv:2402.03220}, 2024.

\bibitem[Lee et~al.(2024)Lee, Oko, Suzuki, and Wu]{lee2024neural}
Jason~D Lee, Kazusato Oko, Taiji Suzuki, and Denny Wu.
\newblock Neural network learns low-dimensional polynomials with sgd near the
  information-theoretic limit.
\newblock \emph{Advances in Neural Information Processing Systems},
  37:\penalty0 58716--58756, 2024.

\bibitem[Arnaboldi et~al.(2024)Arnaboldi, Dandi, Krzakala, Pesce, and
  Stephan]{arnaboldi2024repetita}
Luca Arnaboldi, Yatin Dandi, Florent Krzakala, Luca Pesce, and Ludovic Stephan.
\newblock Repetita iuvant: Data repetition allows sgd to learn high-dimensional
  multi-index functions.
\newblock \emph{arXiv preprint arXiv:2405.15459}, 2024.

\bibitem[Safran and Shamir(2018)]{safran2018spurious}
Itay Safran and Ohad Shamir.
\newblock Spurious local minima are common in two-layer relu neural networks.
\newblock In \emph{International conference on machine learning}, pages
  4433--4441. PMLR, 2018.

\bibitem[Arjevani and Field(2021)]{arjevani2021analytic}
Yossi Arjevani and Michael Field.
\newblock Analytic study of families of spurious minima in two-layer relu
  neural networks: a tale of symmetry ii.
\newblock \emph{Advances in Neural Information Processing Systems},
  34:\penalty0 15162--15174, 2021.

\bibitem[\c{S}im\c{s}ek et~al.(2023)\c{S}im\c{s}ek, Bendjeddou, Gerstner, and
  Brea]{simsek2023should}
Berfin \c{S}im\c{s}ek, Amire Bendjeddou, Wulfram Gerstner, and Johanni Brea.
\newblock Should under-parameterized student networks copy or average teacher
  weights?
\newblock In \emph{Thirty-seventh Conference on Neural Information Processing
  Systems}, 2023.

\bibitem[\c{S}im\c{s}ek et~al.(2021)\c{S}im\c{s}ek, Ged, Jacot, Spadaro,
  Hongler, Gerstner, and Brea]{simsek2021geometry}
Berfin \c{S}im\c{s}ek, Fran{\c{c}}ois Ged, Arthur Jacot, Francesco Spadaro,
  Cl{\'e}ment Hongler, Wulfram Gerstner, and Johanni Brea.
\newblock Geometry of the loss landscape in overparameterized neural networks:
  Symmetries and invariances.
\newblock In \emph{International Conference on Machine Learning}, pages
  9722--9732. PMLR, 2021.

\bibitem[Xu and Du(2023)]{xu2023over}
Weihang Xu and Simon Du.
\newblock Over-parameterization exponentially slows down gradient descent for
  learning a single neuron.
\newblock In \emph{The Thirty Sixth Annual Conference on Learning Theory},
  pages 1155--1198. PMLR, 2023.

\bibitem[Martin et~al.(2024)Martin, Bach, and Biroli]{martin2024impact}
Simon Martin, Francis Bach, and Giulio Biroli.
\newblock On the impact of overparameterization on the training of a shallow
  neural network in high dimensions.
\newblock In \emph{International Conference on Artificial Intelligence and
  Statistics}, pages 3655--3663. PMLR, 2024.

\bibitem[Mondelli and Montanari(2019)]{mondelli2019connection}
Marco Mondelli and Andrea Montanari.
\newblock On the connection between learning two-layer neural networks and
  tensor decomposition.
\newblock In \emph{The 22nd International Conference on Artificial Intelligence
  and Statistics}, pages 1051--1060. PMLR, 2019.

\bibitem[Ben~Arous et~al.(2021)Ben~Arous, Gheissari, and
  Jagannath]{arous2021online}
Gerard Ben~Arous, Reza Gheissari, and Aukosh Jagannath.
\newblock Online stochastic gradient descent on non-convex losses from
  high-dimensional inference.
\newblock \emph{The Journal of Machine Learning Research}, 22\penalty0
  (1):\penalty0 4788--4838, 2021.

\bibitem[Hillar and Lim(2013)]{hillar2013most}
Christopher~J Hillar and Lek-Heng Lim.
\newblock Most tensor problems are np-hard.
\newblock \emph{Journal of the ACM (JACM)}, 60\penalty0 (6):\penalty0 1--39,
  2013.

\bibitem[Daniely et~al.(2016)Daniely, Frostig, and Singer]{daniely2016toward}
Amit Daniely, Roy Frostig, and Yoram Singer.
\newblock Toward deeper understanding of neural networks: The power of
  initialization and a dual view on expressivity.
\newblock \emph{Advances in neural information processing systems}, 29, 2016.

\bibitem[O'Donnell(2021)]{o2021analysis}
Ryan O'Donnell.
\newblock Analysis of boolean functions.
\newblock \emph{arXiv preprint arXiv:2105.10386}, 2021.

\bibitem[Dudeja and Hsu(2018)]{dudeja2018learning}
Rishabh Dudeja and Daniel Hsu.
\newblock Learning single-index models in gaussian space.
\newblock In \emph{Conference On Learning Theory}, pages 1887--1930. PMLR,
  2018.

\bibitem[Robeva(2016)]{robeva2016orthogonal}
Elina Robeva.
\newblock Orthogonal decomposition of symmetric tensors.
\newblock \emph{SIAM Journal on Matrix Analysis and Applications}, 37\penalty0
  (1):\penalty0 86--102, 2016.

\bibitem[Anandkumar et~al.(2014)Anandkumar, Ge, Hsu, Kakade, Telgarsky,
  et~al.]{anandkumar2014tensor}
Animashree Anandkumar, Rong Ge, Daniel~J Hsu, Sham~M Kakade, Matus Telgarsky,
  et~al.
\newblock Tensor decompositions for learning latent variable models.
\newblock \emph{J. Mach. Learn. Res.}, 15\penalty0 (1):\penalty0 2773--2832,
  2014.

\bibitem[Muller et~al.(2022)Muller, Robeva, and Usevich]{muller2022robust}
Tommi Muller, Elina Robeva, and Konstantin Usevich.
\newblock Robust eigenvectors of symmetric tensors.
\newblock \emph{SIAM Journal on Matrix Analysis and Applications}, 43\penalty0
  (4):\penalty0 1784--1805, 2022.

\bibitem[Chen and Meka(2020)]{chen2020learning}
Sitan Chen and Raghu Meka.
\newblock Learning polynomials in few relevant dimensions.
\newblock In \emph{Conference on Learning Theory}, pages 1161--1227. PMLR,
  2020.

\bibitem[Troiani et~al.(2024)Troiani, Dandi, Defilippis, Zdeborov{\'a},
  Loureiro, and Krzakala]{troiani2024fundamental}
Emanuele Troiani, Yatin Dandi, Leonardo Defilippis, Lenka Zdeborov{\'a}, Bruno
  Loureiro, and Florent Krzakala.
\newblock Fundamental computational limits of weak learnability in
  high-dimensional multi-index models.
\newblock \emph{arXiv preprint arXiv:2405.15480}, 2024.

\bibitem[Ba et~al.(2023)Ba, Erdogdu, Suzuki, Wang, and Wu]{ba2023learning}
Jimmy Ba, Murat~A Erdogdu, Taiji Suzuki, Zhichao Wang, and Denny Wu.
\newblock Learning in the presence of low-dimensional structure: a spiked
  random matrix perspective.
\newblock \emph{Advances in Neural Information Processing Systems},
  36:\penalty0 17420--17449, 2023.

\bibitem[Bruna et~al.(2023)Bruna, Pillaud-Vivien, and Zweig]{bruna2023single}
Joan Bruna, Loucas Pillaud-Vivien, and Aaron Zweig.
\newblock On single index models beyond gaussian data.
\newblock \emph{arXiv preprint arXiv:2307.15804}, 2023.

\bibitem[Martinelli et~al.(2023)Martinelli, \c{S}im\c{s}ek, Gerstner, and
  Brea]{martinelli2023expand}
Flavio Martinelli, Berfin \c{S}im\c{s}ek, Wulfram Gerstner, and Johanni Brea.
\newblock Expand-and-cluster: Parameter recovery of neural networks.
\newblock \emph{arXiv preprint arXiv:2304.12794}, 2023.

\bibitem[Maennel et~al.(2018)Maennel, Bousquet, and Gelly]{maennel2018gradient}
Hartmut Maennel, Olivier Bousquet, and Sylvain Gelly.
\newblock Gradient descent quantizes relu network features.
\newblock \emph{arXiv preprint arXiv:1803.08367}, 2018.

\bibitem[Boursier et~al.(2022)Boursier, Pillaud-Vivien, and
  Flammarion]{boursier2022gradient}
Etienne Boursier, Loucas Pillaud-Vivien, and Nicolas Flammarion.
\newblock Gradient flow dynamics of shallow relu networks for square loss and
  orthogonal inputs.
\newblock \emph{Advances in Neural Information Processing Systems},
  35:\penalty0 20105--20118, 2022.

\end{thebibliography}

\newpage 

\onecolumn
\appendix 

\section{Reparameterization of the Loss}
\label{sec:reparam-loss-app}

Recall that the correlation loss expands in terms of the neurons $\bw $ and index vectors $\bv_j$ as 
\[
    L(\bw) = C - \sum_{j=1}^k \E [\sigma(\bw^T \bx) \sigma^*(\bv_j^T \bx)].
\]
Due to the rotational symmetry of the standard Gaussian distribution, each term can be expressed as
\[
    \E [\sigma(\bw^T \bx) \sigma^*(\bv_j^T \bx)] = \E_{(x, y) \sim \N(0, C(u))}[\sigma(x) \sigma^*(y)]
\]
where $C(u)$ is a $2 \times 2$ covariance matrix with entries $C(u)_{11} \!=\! 1, C(u)_{22} \!=\! 1, C(u)_{12} \!=\! u, C(u)_{21} \!=\! u$ where $u = \bw^T \bv_j$. 
Hence one can express the above expectation using the variable $u$ only, and the dual activation is well-defined. 
This observation also allows us to express the loss in terms of the dot products $\bw^T \bv_j$ hence 
\begin{align}\label{eq:loss-exp}
  L_0(V^T \bw) = L(\bw).
\end{align}
We will use this expression to derive Lemma~\ref{lem:projection} of the main which is restated below. 

\begin{lemma}\label{lem:projection-app} Assume that $\bw(t)$ solves the ODE ~\eqref{prob:single-neuron}  
given an initial condition $\bw \!=\! \bw_0$. 
Then the vector of dot products $\bu(t) = V^T \bw(t)$ solves the following ODE
\[
     \frac{d}{dt} \bu = - (A - \bu \bu^T) \nabla L_0(\bu) 
\]
with an initial condition $ \bu_0 = V^T \bw_0 $.  
\end{lemma}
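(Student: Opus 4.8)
The plan is a direct chain-rule computation, built on the reparameterization identity $L(\bw) = L_0(V^T\bw)$ recorded in~\eqref{eq:loss-exp}. First I would establish the gradient relation: since $L(\bw) = L_0(V^T\bw)$ with $V$ a fixed $d\times k$ matrix, the Euclidean chain rule gives
\[
    \nabla L(\bw) = V\,\nabla L_0(V^T\bw) = V\,\nabla L_0(\bu),
\]
where $\nabla L_0$ denotes the Euclidean gradient in $\R^k$ (well-defined under Assumption~\ref{ass:series-convergence}, which guarantees the series~\eqref{eq:corr-loss} is differentiable term by term on the domain $D$). One can sanity-check this against the explicit Hermite expansion~\eqref{eq:corr-loss}, where $\nabla L(\bw) = -\sum_{p\ge p^*}a_pb_p\,p\sum_{j=1}^k(\bv_j^T\bw)^{p-1}\bv_j = V\bigl(-\sum_{p\ge p^*}a_pb_p\,p\sum_{j=1}^k u_j^{p-1}\mathbf e_j\bigr)$, matching $V\nabla L_0(\bu)$.

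Next I would differentiate $\bu(t) = V^T\bw(t)$ in time and substitute the ODE~\eqref{prob:single-neuron} for $\bw$ together with the gradient relation just derived:
\[
    \frac{d}{dt}\bu = V^T\frac{d}{dt}\bw = -V^T(I_d - \bw\bw^T)\nabla L(\bw) = -V^T(I_d - \bw\bw^T)V\,\nabla L_0(\bu).
\]
Finally, expand the matrix in the middle using $A = V^TV$ and $\bu = V^T\bw$:
\[
    V^T(I_d - \bw\bw^T)V = V^TV - (V^T\bw)(\bw^TV) = A - \bu\bu^T,
\]
which yields $\frac{d}{dt}\bu = -(A - \bu\bu^T)\nabla L_0(\bu)$. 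The initial condition $\bu_0 = V^T\bw_0$ is immediate from $\bu(t) = V^T\bw(t)$ evaluated at $t=0$. (As a side remark one can also note $\tfrac{d}{dt}\|\bw\|^2 = -2\bw^T(I_d-\bw\bw^T)\nabla L = 0$, so $\bw(t)$ stays on the unit sphere, consistent with $\bu(t)$ remaining in the ellipsoid $D$.)

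There is no substantive obstacle here — the lemma is genuinely a one-line chain rule. The only points that merit a sentence of care are: (i) justifying differentiability of $L_0$ so that $\nabla L_0(\bu)$ exists along the trajectory (handled by Assumption~\ref{ass:series-convergence}, noting $\|\bu\|_2 \le \lambda_{\max}(A)^{1/2}$ on $D$); and (ii) making explicit that $\nabla L_0$ is the \emph{ambient} Euclidean gradient in $\R^k$, not an intrinsic gradient on the ellipsoid, so that the chain rule $\nabla L(\bw) = V\nabla L_0(\bu)$ holds verbatim.
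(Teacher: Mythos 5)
Your proposal is correct and follows essentially the same route as the paper's proof: differentiate $\bu = V^T\bw$, substitute the spherical gradient flow, use the chain-rule identity $\nabla L(\bw) = V\nabla L_0(V^T\bw)$, and simplify $V^T(I_d - \bw\bw^T)V = A - \bu\bu^T$. The additional remarks on differentiability and norm preservation are fine but not needed beyond what the paper already assumes.
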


\begin{proof} The chain rule allows us to develop the time derivative as follows 
\[
    \frac{d}{dt} V^T \bw = V^T \frac{d}{dt} \bw = - V^T (I_d - \bw \bw^T) \nabla L(\bw). 
\]
The Euclidean gradient of $L$ can be expressed as follows using Eq.~\eqref{eq:loss-exp}
\[
    \nabla L (\bw) = V \nabla L_0(V^T \bw).
\]
Plugging in the Euclidean gradient of $L$, we get 
\begin{align*}
    \frac{d}{dt} V^T \bw &= - V^T (I_d - \bw \bw^T) V \nabla L_0(V^T \bw) \\
    &= - (V^T V - (V^T\bw) (V^T\bw)^T ) \nabla L_0(V^T \bw)
\end{align*}
which completes the derivation. 
Substituting $\bu \!=\! V^T \bw$ and $ A \!=\! V^T V$ gives the expression stated in the Lemma. 
\end{proof}

\paragraph{Derivation of the Boundary}

In order to explicitly describe the domain of $L_0$, we need to understand how the domain of $L(\bw)$, i.e., the unit sphere, changes under the linear projection $V^T: \R^d \to \R^k$. 
This is achieved below, where we characterize the image of the linear projection $V^T: \S^d \to D$ when restricted to the unit sphere.

Let us express $\bw$ as a linear combination of $\bv_j$'s and an orthogonal component to the span of $\bv_j$'s
\[
    \bw = \sum_{j=1}^k \ba_{j} \bv_{j} + \bv^\perp. 
\]
Since $\bw$ is on the unit sphere, the $\ba_{j}$'s should satisfy the following constraint 
\begin{align*}
    \| \bw \|^2_2 = \ba^T (V^T V) \ba + \| \bv^\perp \|^2_2 = 1 \ \ \Rightarrow \ \ \ba^T (V^T V) \ba \leq 1. 
\end{align*}

The dot products are given by $\bu_j = \bw^T \bv_j$. 
Plugging in the expansion of $\bw$ in the basis of $\bv_j$, we get 
\begin{align*}
    \bu_j = \sum_{j'=1}^k \ba_{j'} \bv_j^T \bv_{j'} \ \ \Leftrightarrow \ \ \ba = (V^T V)^{-1} \bu.
\end{align*}
Using the constraint on $\ba$ allows us to derive the constraint on $\bu$ using the above identity as follows 
\begin{align*}
    \ba^T (V^T V) \ba \leq 1 \ \ \Leftrightarrow \ \ \bu^T (V^T V)^{-1} (V^T V) (V^T V)^{-1} \bu \leq 1 \ \ \Leftrightarrow \ \ \bu^T (V^T V)^{-1} \bu \leq 1. 
\end{align*}
Hence the space of dot products is an ellipsoidal ball $D = \{ \bu : \bu^T A^{-1} \bu \leq 1 \}$ where the shape is given by $A^{-1}$.

\paragraph{Eigenvalue and Eigenvector analysis}
Let us denote with $\textbf{1}_k$ and $\textbf{I}_k$ the all-ones vector and the identity matrix of size $k$. We want to determine the eigenvalues and eigenvectors of the matrix 
\[
A = \beta  \textbf{1}_k \textbf{1}_k^T + (1-\beta)\textbf{I}_k. 
\]
First, note that $A$ and $\textbf{1}_k \textbf{1}_k^T$ have the same eigenvectors. If $\textbf{v}$ is an eigenvector of $\textbf{1}_k \textbf{1}_k^T$ with eigenvalue $\lambda$, then 
\begin{align*}
A\textbf{v} & = \beta  \textbf{1}_k \textbf{1}_k^T \textbf{v} + (1-\beta)\textbf{I}_k\textbf{v} \\ 
& = \beta \lambda \textbf{v} + (1-\beta)\textbf{v} = (\beta \lambda + (1-\beta)) \textbf{v}, 
\end{align*}
and $\textbf{v}$ is an eigenvector of $A$ with eigenvalue $(\beta \lambda + (1-\beta))$. Since $\textbf{1}_k \textbf{1}_k^T$ has rank one, it has exactly one non-zero eigenvalue given by $\|{\textbf{1}_k}\|^2 = k$ with eigenspace $\text{span}(\textbf{1}_k)$ and $k-1$ zero eigenvalues with eigenspace $\text{span}(\textbf{1}_k)^{\perp} = \text{span}\{\textbf{e}_1 - \textbf{e}_2, \textbf{e}_1 - \textbf{e}_3, \ldots,  \textbf{e}_1 - \textbf{e}_k\}$, where $\textbf{e}_i$ denotes the $i$th standard unit vector. This implies that $A$ has eigenvalues $1+\beta(k-1)$ and $(1-\beta)$ with eigenspaces $\text{span}(\textbf{1}_k)$ and $\text{span}\{\textbf{e}_1 - \textbf{e}_2, \textbf{e}_1 - \textbf{e}_3, \ldots,  \textbf{e}_1 - \textbf{e}_k\}$ respectively. In particular, this shows that $A$ is invertible for $0 \leq \beta < 1$. 
	
For subsequent calculations, we need to relate the eigenvalues  and eigenvectors of $A=V^TV$ to the eigenvalues and eigenvectors of $VV^T$. This is achieved in the following Lemma. 
\begin{lemma}\label{lem:eigenvalue-equality}
The matrices $V^TV$ and $VV^T$ satisfy the following properties.
\begin{enumerate}[label=(\roman*)]
\item $V^TV$ and $VV^T$ have the same nonzero eigenvalues.
\item Every eigenvector $\bv$ of $VV^T$ with nonzero eigenvalue $\lambda_{\bv}$ satisfies $\bv = V \bw$, where $\bw$ is an eigenvector of $V^TV$ with the same eigenvalue.
\end{enumerate}
\end{lemma}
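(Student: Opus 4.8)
\textbf{Proof proposal for Lemma~\ref{lem:eigenvalue-equality}.}

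The plan is to prove both items at once by exhibiting an explicit bijection between the nonzero-eigenvalue eigenspaces of $V^TV$ and those of $VV^T$, implemented by the maps $\bw \mapsto V\bw$ and $\bv \mapsto V^T\bv$. First I would take an eigenvector $\bw \in \R^k$ of $A = V^TV$ with eigenvalue $\lambda \neq 0$, so $V^TV\bw = \lambda\bw$. Applying $V$ on the left gives $VV^T(V\bw) = V(V^TV\bw) = \lambda(V\bw)$, so $V\bw$ is an eigenvector of $VV^T$ with the same eigenvalue $\lambda$, provided $V\bw \neq 0$; but $\|V\bw\|_2^2 = \bw^T V^T V \bw = \lambda\|\bw\|_2^2 \neq 0$ since $\lambda \neq 0$ and $\bw \neq 0$, so indeed $V\bw \neq 0$. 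This shows every nonzero eigenvalue of $V^TV$ is a nonzero eigenvalue of $VV^T$. The symmetric argument, starting from an eigenvector $\bv$ of $VV^T$ with eigenvalue $\lambda \neq 0$ and left-multiplying $VV^T\bv = \lambda\bv$ by $V^T$, shows $V^T\bv$ is an eigenvector of $V^TV$ with the same eigenvalue, again nonzero since $\|V^T\bv\|_2^2 = \lambda\|\bv\|_2^2 \neq 0$. Combining the two inclusions gives item (i), equality of the nonzero spectra.

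For item (ii), given an eigenvector $\bv$ of $VV^T$ with nonzero eigenvalue $\lambda_{\bv}$, I would simply set $\bw := \tfrac{1}{\lambda_{\bv}} V^T\bv$. Then $V\bw = \tfrac{1}{\lambda_{\bv}} VV^T\bv = \tfrac{1}{\lambda_{\bv}}\lambda_{\bv}\bv = \bv$, and by the computation in the previous paragraph $V^T\bv$ — hence $\bw$ — is an eigenvector of $V^TV$ with eigenvalue $\lambda_{\bv}$. This is exactly the claimed representation $\bv = V\bw$ with $\bw$ an eigenvector of $V^TV$ sharing the eigenvalue.

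There is no real obstacle here — the argument is the standard $V^TV$ versus $VV^T$ spectral correspondence — so the only thing to be careful about is the nondegeneracy bookkeeping: one must verify that the maps $V$ and $V^T$ do not kill the eigenvectors in question (which is where $\lambda \neq 0$ is used, via $\|V\bw\|_2^2 = \lambda\|\bw\|_2^2$ and $\|V^T\bv\|_2^2 = \lambda\|\bv\|_2^2$), and, if one wants a genuine bijection of eigenspaces rather than just equality of spectra, check that $\bw \mapsto V\bw$ and $\bv \mapsto \tfrac{1}{\lambda}V^T\bv$ are mutually inverse on the relevant eigenspaces. Since in this paper $V$ has full column rank $k$ (the index vectors are assumed linearly independent, so $A$ is invertible), every eigenvalue of $A = V^TV$ is in fact nonzero, which streamlines the statement further, though the lemma as written does not need this.
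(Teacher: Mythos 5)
Your proposal is correct and follows essentially the same route as the paper: push the eigenvector equation through $V$ (resp.\ $V^T$) to transfer eigenvectors between $V^TV$ and $VV^T$, using $\lambda \neq 0$ to guarantee the image is nonzero. Your version is in fact slightly more explicit than the paper's, since you verify nonvanishing via $\|V\bw\|_2^2 = \lambda\|\bw\|_2^2$ and exhibit $\bw = \lambda_{\bv}^{-1}V^T\bv$ with $V\bw = \bv$ for item (ii), steps the paper leaves implicit.
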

	
\begin{proof}
Both assertions follow from the following observation. Let $\bw$ be an eigenvector of $V^TV$ with eigenvalue $\lambda_{\bw} \neq 0$, then 
\begin{align*}
V^TV \bw = \lambda_{\bw} \bw \implies V \Bigl( V^TV \Bigr) \bw = \lambda_{\bw} V \bw \implies \Bigl( VV^T \Bigr) V \bw = \lambda_{\bw} V \bw, 
\end{align*}
and $V\bw \neq 0$, since $\lambda_{\bw} \neq 0$. In other words, $V\bw$ is an eigenvector of $VV^T$ with the same nonzero eigenvalue $\lambda_{\bw}$. Repeating the argument with $VV^T$ instead of $V^TV$ also shows that $V^T \bw$ is an eigenvector of $V^TV$ with eigenvalue $\lambda_{\bw}$, whenever $\bw$ is an eigenvector of $VV^T$ with $\lambda_{\bw} \neq 0$.
\end{proof}

\section{Learning Index Vectors}

The low dimensional dynamical system is expressed in terms of the Euclidean gradient of the loss in Lemma~\ref{lem:projection-app}. 
We will use the concrete version where each coordinate of $\bu$ is made explicit in some proofs in the following Subsections. 
Let us make the expression for the Euclidean gradient of $L_0$ explicit using Eq.~\eqref{eq:corr-loss}
\[
    \nabla L_0(\bu) = - \sum_{p \geq p^*} c_p p \ {\bu}^{\odot p-1} \quad \text{where} \quad c_p = a_p b_p.
\]
The time derivative of $\bu$ is expressed as a matrix vector product in Lemma~\ref{lem:projection-app}. 
Carrying out the matrix vector product explicitly, one gets the following expression for the ODE in terms of the Hermite coefficients
\begin{align}\label{eq:ODE}
    \frac{d}{dt} \bu_j = \sum_{p \geq p^*} c_p p \bigl(\bu_j^{p-1} + \sum_{j' \neq j} A_{j j'} \bu_{j'}^{p-1} - \bu_j \sum_{j'} \bu_{j'}^{p}\bigr). \quad \quad \color{blue}{\text{(general index vectors)}}
\end{align}
Note that $a_p b_p \geq 0$ by Assumption~\ref{ass:hermite-coeff} and $A_{j j'} = \bv_j^T \bv_{j'} \geq 0$ by Assumption~\ref{ass:index-vectors}. 
We analyze the dynamical system in the positive quadrant, i.e., $\bu_j \geq 0$ for all $j \in [k]$.
Hence, we can give the following interpretation for the low dimensional ODE~\eqref{eq:ODE} in terms of the three terms: (i) a self-reinforcing term promoting the self-growth of $\bu_j$, (ii) the reinforcing term promoting the growth of $\bu_j$ given the other dot products, and (iii) a damping term decreasing the growth proportional to $\bu_j$ and the $\ell_p$-norm of the vector of dot products. 

\paragraph{Trajectories remain in the positive quadrant}
For the above interpretation to be valid, we need to show that the trajectories are confined to the positive quadrant. 
More formally, that is, the set $S = [0, 1]^k \cap D$ is invariant.
To show that trajectories do not leave the set $S$, it suffices to analyze the boundaries. 
When $ \bu_j=0$ and $\bu_{j'}\geq 0$ for $j' \geq j$, we have the following
\begin{align}\label{eq:dot-prods-nonnneg}
     \frac{d}{dt} \bu_j = \sum_{p \geq p^*} c_p p \sum_{j' \neq j} A_{j j'} \bu_{j'}^{p-1} \geq 0,
\end{align}
due to Assumption~\ref{ass:hermite-coeff} and Assumption~\ref{ass:index-vectors}. 
Hence, $\bu_j$ does not decrease and become negative; in other words, the trajectories do not leave the set $S$. 

\paragraph{The assumption on the positivity of the dot products at initialization}
In most of our analysis, we assume that the dot products are positive at initialization. 
With random initialization, half of the dot products would be positive on average. 
However, negative dot products at initialization require a more refined analysis than we present here (except for the even activation case, where the proofs can be extended to negative dot products using a sign-flipping argument). 
A reasonable guess is that as long as one of the dot products at initialization is positive, the single neuron dynamics exhibits the same time complexity and converges to the nearest index vector.
However, the proof needs refinements for the relaxed assumption on the initialization to be sufficient (i.e., when only one dot product is positive) which is beyond the scope of this work. 

\subsection{Time complexity}
\label{sec:time-complexity-app}

\begin{theorem}[Time complexity]\label{thm:time-complexity-app} Assume that $d$ is large, $\bu_j(0) \!=\! \Theta(d^{-1/2})$ and $\bu_j(0) \!>\! 0$ for all $j \in [k]$. For $p^* \geq 2$, we show the following time complexities
\begin{itemize}
    \item if $p^* \geq 3$, $T=\Theta(d^{p^*/2-1})$ is necessary and sufficient to reach $\| \bu(T) \|_2 = \Theta(1)$, 
    \item if $p^* = 2$, $T=\Theta(\log(d))$ is necessary and sufficient to reach $\| \bu(T) \|_2 = \Theta(1)$.
\end{itemize}
\end{theorem}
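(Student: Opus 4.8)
\textbf{Proof proposal for Theorem~\ref{thm:time-complexity-app}.}

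The plan is to follow the proof-sketch strategy: reduce the $k$-dimensional ODE~\eqref{eq:ODE} for $\bu$ to a one-dimensional ODE for the $A^{-1}$-weighted norm $S = S(t) := \sqrt{\bu(t)^T A^{-1} \bu(t)}$, and then import the single-index time-complexity analysis. First I would differentiate $S^2$ along the flow of Lemma~\ref{lem:projection-app}: using $\frac{d}{dt}\bu = -(A - \bu\bu^T)\nabla L_0(\bu)$ one gets
\begin{align*}
    \frac{d}{dt} S^2 &= 2\,\bu^T A^{-1} \frac{d}{dt}\bu = -2\,\bu^T A^{-1}(A - \bu\bu^T)\nabla L_0(\bu) = -2(1 - S^2)\,\bu^T \nabla L_0(\bu),
\end{align*}
where the last equality uses $\bu^T A^{-1} A = \bu^T$ and $\bu^T A^{-1}\bu\bu^T = S^2 \bu^T$. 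Plugging in $\nabla L_0(\bu) = -\sum_{p\geq p^*} c_p\, p\, \bu^{\odot(p-1)}$ gives $\bu^T\nabla L_0(\bu) = -\sum_{p\geq p^*} c_p\, p\, \|\bu\|_p^p$, hence
\begin{align*}
    \frac{d}{dt} S^2 = 2(1 - S^2)\sum_{p\geq p^*} c_p\, p\, \|\bu\|_p^p .
\end{align*}
The point is that $\|\bu\|_p^p$ is sandwiched between powers of $S$. During the search phase $S = \Theta(d^{-1/2}) = o(1)$, so $1 - S^2 = \Theta(1)$ can be treated as a constant; and I must show $\|\bu\|_p^p \asymp S^p$ up to constants depending only on $k$ and the eigenvalues of $A$ (which are $\Theta(1)$ since $k \ll d$). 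One direction is immediate: $\|\bu\|_p^p \le \|\bu\|_2^p \le \lambda_{\max}(A)^{p/2} S^p$ for $p \ge 2$. For the lower bound I would use that the trajectory stays in the positive quadrant (established in the passage before this subsection) together with the initial conditions $\bu_j(0) = \Theta(d^{-1/2}) > 0$; a Hölder / norm-equivalence argument of the form $\|\bu\|_p^p \ge k^{1-p/2}\|\bu\|_2^p \ge k^{1-p/2}\lambda_{\min}(A)^{p/2} S^p$ handles it, once one knows the coordinates remain comparable (or at least that no coordinate collapses faster than $S$). Combining these, $\frac{d}{dt} S^2 \asymp \sum_{p\geq p^*} c_p\, p\, S^p$, and since $S$ is small the sum is dominated by its lowest term $c_{p^*} p^* S^{p^*}$ up to constants (using Assumption~\ref{ass:series-convergence} to control the tail uniformly). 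This yields $\dot S \asymp S^{p^*-1}$, exactly the single-index scalar ODE.

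The final step is to integrate the scalar differential inequalities $c_1 S^{p^*-1} \le \dot S \le c_2 S^{p^*-1}$ from $S(0) = \Theta(d^{-1/2})$ up to $S(T) = \Theta(1)$. For $p^* \ge 3$ this gives $S(t)^{-(p^*-2)} - S(0)^{-(p^*-2)} \asymp -t$, so the escape time is $T \asymp S(0)^{-(p^*-2)} = \Theta(d^{(p^*-2)/2}) = \Theta(d^{p^*/2-1})$; for $p^* = 2$ the equation is $\dot S \asymp S$, giving exponential growth and $T \asymp \log(1/S(0)) = \Theta(\log d)$. The norm-equivalence bounds $\lambda_{\min}(A)^{1/2} S \le \|\bu\|_2 \le \lambda_{\max}(A)^{1/2} S$ then transfer this to $\|\bu(T)\|_2 = \Theta(1)$, and running the argument with both the upper and lower scalar bounds yields matching necessary-and-sufficient conclusions.

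The main obstacle I anticipate is the lower bound $\|\bu\|_p^p \gtrsim S^p$, since a priori the flow could concentrate $\bu$ onto a few coordinates in a way that makes higher $\ell_p$ norms decay relative to $S = \Theta(\|\bu\|_2)$ — but for $p \ge 2$, $\|\bu\|_p \ge k^{-(1/2 - 1/p)}\|\bu\|_2$ always holds, so the genuinely delicate point is only whether the damping term and the off-diagonal reinforcing terms $A_{jj'}\bu_{j'}^{p-1}$ could drive some $\bu_j$ negative or to zero before $S = \Theta(1)$; the invariance of the positive quadrant $S = [0,1]^k \cap D$ plus positivity of all $c_p$ and all $A_{jj'}$ rules this out, but I would need to check the book-keeping carefully, and also verify that the constants hidden in $\Theta(\cdot)$ depend only on $k, \lambda_{\min}(A), \lambda_{\max}(A)$ and the Hermite coefficients, not on $d$. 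A secondary technical point is making precise ``$1 - S^2 = \Theta(1)$'': one must argue $S(t)$ stays bounded away from $1$ throughout the window $[0,T]$, which follows since $\|\bu(t)\|_2 \le \|\bw(t)\|_2 = 1$ forces $S \le \lambda_{\min}(A)^{-1/2}$ — actually one wants $S$ to not yet have reached $\Theta(1)$, which is exactly the definition of the window, so the argument is self-consistent provided we stop the clock at the first time $\|\bu\|_2$ hits the target constant.
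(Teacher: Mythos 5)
Your proposal is correct and follows essentially the same route as the paper's proof: the same quantity $s_2=\bu^T A^{-1}\bu$ with the same time-derivative identity $\tfrac{1}{2}\tfrac{d}{dt}s_2=(1-s_2)\sum_{p\ge p^*}c_p p\,\|\bu\|_p^p$ (valid because positive-quadrant invariance gives $\sum_j \bu_j^p=\|\bu\|_p^p$), the same Hölder and eigenvalue sandwiching $k^{1-p/2}\lambda_{\min}(A)^{p/2}s_2^{p/2}\le\|\bu\|_p^p\le\lambda_{\max}(A)^{p/2}s_2^{p/2}$, the same tail control via Assumption~\ref{ass:series-convergence} together with stopping the clock while $s_2$ is bounded away from $1$, and the same separation-of-variables integration from $s_2(0)=\Theta(d^{-1})$ yielding $\Theta(d^{p^*/2-1})$ for $p^*\ge 3$ and $\Theta(\log d)$ for $p^*=2$. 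The concerns you flag (positivity of coordinates, $1-s_2=\Theta(1)$, $d$-independence of constants) are exactly the points the paper addresses, and in the same way.
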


\begin{proof}
Since the dot products are assumed to be positive at initialization, they remain non-negative at all times (see Eq.~\ref{eq:dot-prods-nonnneg} and the comments therein). That is, we have $\bu_j(t) \geq 0$ for all $j \in [k]$ and for all $t \geq 0$.

Let us define $s_2 = \bu^{T}A^{-1}\bu$, the square of the norm induced by $A^{-1}$.
The time derivative of $s_2$ is given by
\begin{align}
\frac{1}{2}\frac{d}{dt}s_{2} & = \Big(\sum_{p \geq p^{*}}c_{p}p (A-\bu\bu^{T})\bu^{\odot p-1}  \Big)^{T}A^{-1}\bu  \label{eq:matrix-derivative}\\
& = \sum_{p \geq p^{*}} c_{p}p \Big( (\bu^{\odot p-1})^{T}\Big(A^{T}-(\bu\bu^{T})^{T}\Big)A^{-1}\bu\Big) \nonumber \\
& = \sum_{p \geq p^{*}} c_{p}p \Big( (\bu^{\odot p-1})^{T}\Big(A-\bu \bu ^{T}\Big)A^{-1}\bu \Big) \nonumber \\
& = \sum_{p \geq p^{*}} c_{p}p \Big( (\bu^{\odot p-1})^{T}\bu (1-\bu^{T}A^{-1}\bu)\Big) \nonumber \\ &= \sum_{p \geq p^{*}} c_{p}p \Big( (\bu^{\odot p-1})^{T}\bu (1-s_{2})\Big) \label{eq:s2-sum}, 
\end{align}
where in equation~\eqref{eq:matrix-derivative} we used that
\[
\frac{1}{2} \frac{d}{dt} (\bu^{T}A^{-1}\bu) =  (\frac{d}{dt}\bu)^{T}A^{-1}\bu. 
\]
Since the correlations are positive at all times, we have $\| \bu \|_{p}^p = (\bu^{\odot p-1})^T\bu$ at all times. Furthermore, since we initialize $\bu$ inside the ellipsoidal ball $D = \{ \bu : \bu^T A^{-1} \bu \leq 1 \}$ (which comes from the fact that $\bu = V^T \bw$ for a unit vector $\bw$), we have that $1-s_2 \geq 0$ holds at all times. This implies that $s_2$ is increasing until we hit the boundary of $D$. We can control the $\ell_p$ norm by using an inequality between norms 
\[
r \leq p \quad \Rightarrow \quad \| \bu \|_p \leq \| \bu \|_r \leq k^{\frac{1}{r}-\frac{1}{p}} \| \bu \|_p,
\]
which is a consequence of Hölder's inequality. For $r=2$, this is equivalent to 
\begin{equation}
\label{eq:hoelder-l2}
k^{1-\frac{p}{2}} \| \bu \|_2^p \leq \| \bu \|_p^p  \leq \| \bu \|_2^p .
\end{equation}
Using the following estimate 
\begin{equation}\label{eq:l2-matrix-norm}
\lambda_{\min}(A) \bu^{T}A^{-1}\bu \leq \|\bu\|_{2}^{2} \leq \lambda_{\max}(A) \bu^{T}A^{-1}\bu, 
\end{equation}
and Hölder's inequality~\eqref{eq:hoelder-l2}, we can bound the $\ell_p$ norm in terms of $s_2$
\begin{align*}
k^{1-\frac{p}{2}} \lambda_{\min}(A)^{\frac{p}{2}} ( \bu^{T}A^{-1}\bu)^{\frac{p}{2}} \leq \|\bu\|_{p}^{p} \leq \lambda_{\max}(A)^{\frac{p}{2}} (\bu^{T}A^{-1}\bu)^{\frac{p}{2}} .
\end{align*}
Let us denote $\lambda_{\max}(A)$ and $\lambda_{\min}(A)$ with $\lambda_1$ and $\lambda_k$ respectively. 
For $p^* \geq 2$, the time derivative of $s_2$ can be bounded as 
\begin{align*}
	&\sum_{p \geq p^*} 2 c_p p \lambda_{k}^{\frac{p}{2}} k^{1-\frac{p}{2}}  s_2^\frac{p}{2} (1 - s_2) \leq \frac{d}{dt} s_2 \leq \sum_{p \geq p^*} 2 c_p p \lambda_{1}^{\frac{p}{2}} s_2^\frac{p}{2} (1 - s_2) \\
	&\sum_{p \geq p^*} 2 C_p \lambda_{k}^{\frac{p}{2}}s_2^\frac{p}{2} (1 - s_2) \leq \frac{d}{dt} s_2 \leq \sum_{p \geq p^*} 2 \bar{C}_p \lambda_{1}^{\frac{p}{2}} s_2^\frac{p}{2} (1 - s_2) \quad \text{where} \quad C_p = c_p p k^{1-\frac{p}{2}}, \bar{C}_p = c_p p.
\end{align*}
Since the $c_p$'s are all positive, we can drop the higher order terms and get the following lower bound
\[
2 C_{p^*}\lambda_{k}^{\frac{p^*}{2}} s_2^\frac{p^*}{2} (1 - s_2) \leq \frac{d}{dt} s_2 \leq 2 \bigl(\sum_{p \geq p^*} \bar{C}_p \lambda_{1}^{\frac{p}{2}}\bigr) s_2^\frac{p^*}{2} (1 - s_2),
\]
where the upper bound comes from observing that $s_2$ does not exceed one ($s_2 \leq 1$) for all times due to the problem geometry. 
To simplify, let us study the dynamics for the interval $s_2 \in [b_0, 1/2]$. 
Up to a change in the constant of the lower bound, we get the following lower and upper bounds 
\[
C_{p^*} s_2^\frac{p^*}{2} \leq \frac{d}{dt} s_2 \leq 2 \bigl(\sum_{p \geq p^*} \bar{C}_p \lambda_{1}^{\frac{p}{2}}\bigr) s_2^\frac{p^*}{2}. 
\]
Using separation of variables, we can integrate the sandwiched part 
\[
\int_{b_0}^{b_1} s_2^\frac{-p^*}{2} d s_2 
\]
which is lower and upper bounded by $C t$ and $\bar{C} t$ for some constants $C$ and $\bar{C}$.
Let us do the integration first for $p^* \geq 3$
\[
\frac{s_2^{\frac{-p^*}{2}+1}}{\frac{-p^*}{2}+1} \biggl|_{b_0}^{b_1} = \frac{1}{\frac{p^*}{2}-1} \bigl(-b_1^{\frac{-p^*}{2}+1} + b_0^{\frac{-p^*}{2}+1} \bigr) = \Theta(d^{\frac{p^*}{2}-1}),
\]
since $b_1 = \Theta(1)$ and $b_0 = \Theta(d^{-1})$ (because every correlation is initialized with $\Theta(d^{-1/2})$).
It remains to do the integration for $p^*=2$ which is simply
\[
\int_{b_0}^{b_1} s_2^{-1} d s_2 = \log(s_2) \biggl|_{b_0}^{b_1} = \log(b_1) - \log(b_0) = \Theta(\log(d)). 
\] 
Finally, the estimate in~\eqref{eq:l2-matrix-norm} implies that the time needed for the $\ell_2$ norm of $\bu$ to reach a non-vanishing value is of the same order as the time needed for $s_2$.
\end{proof}

\subsection{Directional Convergence}
\label{sec:directional-convergence-app}

This subsection applies to orthogonal index vectors. 
In this case $A_{j j'} = 0$ for $j \neq j'$, hence the ODE~\eqref{eq:ODE} simplifies
\begin{align}\label{eq:ODE-orth}
    \frac{d}{dt} \bu_j = \sum_{p \geq p^*} c_p p \bigl(\bu_j^{p-1} - \bu_j \sum_{j'} \bu_{j'}^{p}\bigr). \quad \quad \color{blue}{\text{(orthogonal index vectors)}}
\end{align}

First, we make a helper lemma for the invariance. 

\begin{lemma}[Symmetry breaking happens at initialization]\label{lem:sym-breaking-app} 
Wlog assume that $ |\bu_1(0)| = \max_{j \in \{1,...,k\} } |\bu_j(0)| $. 
Assume that $\sigma^*$ has an information exponent $p^* \! \geq \! 2$. 
We show the following 
\begin{enumerate}[label=(\roman*)]
    \item if all correlations are positive at initialization, then the maximum correlation is preserved, i.e. $\bu_1(t) = \max_{j \in \{1,...,k\} } \bu_j(t)$ for any $\sigma^*$,
    \item if $\sigma^* $ is even, then $ |\bu_1(t)| = \max_{j \in \{1,...,k\} } |\bu_j(t)|$ for any initialization.
\end{enumerate}
\end{lemma}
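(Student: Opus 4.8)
\textbf{Proof plan for Lemma~\ref{lem:sym-breaking-app}.}

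The plan is to analyze the evolution of the gap between the (absolute) largest coordinate and any other coordinate, using the simplified orthogonal ODE~\eqref{eq:ODE-orth}. First I would treat case (i). Fix $j \neq 1$ and set $\Delta_{1j}(t) = \bu_1(t) - \bu_j(t)$; by hypothesis $\Delta_{1j}(0) \geq 0$ and all coordinates stay in $[0,1]$ by the invariance of $S$ established earlier. Differentiating and using~\eqref{eq:ODE-orth}, the damping term $-\bu_\ell \sum_{j'} \bu_{j'}^p$ contributes $-(\bu_1 - \bu_j)\sum_{j'}\bu_{j'}^p = -\Delta_{1j}\sum_{j'}\bu_{j'}^p$ to $\frac{d}{dt}\Delta_{1j}$ after summing over $p$, while the self-reinforcing terms contribute $\sum_{p \geq p^*} c_p p (\bu_1^{p-1} - \bu_j^{p-1})$. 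Since $t \mapsto t^{p-1}$ is nondecreasing on $[0,\infty)$ for $p \geq p^* \geq 2$, and $\bu_1 \geq \bu_j \geq 0$ whenever $\Delta_{1j} \geq 0$, each term $\bu_1^{p-1} - \bu_j^{p-1} \geq 0$, so with $c_p \geq 0$ (Assumption~\ref{ass:hermite-coeff}) the reinforcing contribution is nonnegative. Hence on the boundary $\Delta_{1j} = 0$ we get $\frac{d}{dt}\Delta_{1j} \geq 0$, so $\Delta_{1j}$ cannot cross into negative values; a standard comparison/Gr\"onwall argument (writing $\frac{d}{dt}\Delta_{1j} \geq -\Delta_{1j}\sum_{j'}\bu_{j'}^p$ and noting the coefficient is bounded) then gives $\Delta_{1j}(t) \geq 0$ for all $t \geq 0$, i.e.\ $\bu_1(t) = \max_j \bu_j(t)$.

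For case (ii), $\sigma^*$ even forces $b_p = 0$ for all odd $p$, so only even powers $p$ appear in~\eqref{eq:ODE-orth}; in particular $p - 1$ is odd, so $t \mapsto t^{p-1}$ is an odd increasing function on all of $\R$. The right idea is to track $m_j(t) = |\bu_j(t)|$, or equivalently to compare $\bu_1^2$ with $\bu_j^2$. I would set $\delta_j(t) = \bu_1(t)^2 - \bu_j(t)^2$ and compute $\frac{d}{dt}\delta_j = 2\bu_1 \dot\bu_1 - 2\bu_j \dot\bu_j$. Using~\eqref{eq:ODE-orth} with only even $p$, the damping terms give $-2(\bu_1^2 - \bu_j^2)\sum_{j'}\bu_{j'}^p = -2\delta_j \sum_{j'}\bu_{j'}^p$, and the reinforcing terms give $\sum_{p} 2 c_p p (\bu_1^p - \bu_j^p) = \sum_p 2 c_p p ((\bu_1^2)^{p/2} - (\bu_j^2)^{p/2})$, which is nonnegative whenever $\delta_j \geq 0$ since $p/2 \geq 1$ and $c_p \geq 0$. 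Thus on $\{\delta_j = 0\}$ we have $\frac{d}{dt}\delta_j \geq 0$, and the same comparison argument as in case (i) shows $\delta_j(t) \geq 0$ for all $t$, i.e.\ $|\bu_1(t)| = \max_j |\bu_j(t)|$. Note this argument needs no sign assumption on the initialization, exactly because only even powers of $\bu_j$ enter.

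The main obstacle I anticipate is making the "cannot cross the boundary" step fully rigorous: knowing $\frac{d}{dt}\Delta_{1j} \geq 0$ \emph{on} $\{\Delta_{1j}=0\}$ does not by itself preclude the difference dipping negative (the vector field could be such that $\Delta_{1j}$ touches $0$ and the higher-order behavior matters). The clean way around this is to avoid arguing pointwise on the zero set and instead exhibit a genuine differential inequality valid on a neighborhood: one writes $\frac{d}{dt}\Delta_{1j} \geq -\Delta_{1j} \cdot g(t)$ with $g(t) = \sum_{p\geq p^*} c_p p \sum_{j'}\bu_{j'}(t)^p$ bounded uniformly in $t$ (using $\bu_{j'} \in [0,1]$ and Assumption~\ref{ass:series-convergence}), which holds identically for $\Delta_{1j}$ of either sign after factoring $\bu_1^{p-1} - \bu_j^{p-1} = (\bu_1 - \bu_j)(\bu_1^{p-2} + \dots + \bu_j^{p-2})$ out of the reinforcing term and discarding the nonnegative polynomial factor; then Gr\"onwall's inequality gives $\Delta_{1j}(t) \geq \Delta_{1j}(0) e^{-\int_0^t g}\geq 0$. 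The even case is handled identically with $\delta_j$ in place of $\Delta_{1j}$ after the factorization $\bu_1^p - \bu_j^p = (\bu_1^2 - \bu_j^2)\big((\bu_1^2)^{p/2-1} + \dots\big)$.
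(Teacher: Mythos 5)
Your case (i) is essentially the paper's argument: the paper also differentiates $\bu_1-\bu_j$, uses the positive-quadrant invariance, and runs the same factorization $\bu_1^{p-1}-\bu_j^{p-1}=(\bu_1-\bu_j)(\bu_1^{p-2}+\dots+\bu_j^{p-2})$ together with $\sum_{j'}\bu_{j'}^2\le 1$ and $\bu_1\ge\bu_{j'}$ to conclude that $\frac{d}{dt}(\bu_1-\bu_j)\ge 0$ whenever $\bu_1-\bu_j\ge 0$. Your case (ii) is a genuinely different route: the paper first shows the odd Hermite coefficients of an even $\sigma^*$ vanish and then exploits the sign-flip equivariance of the ODE ($\bu_j\mapsto\xi_j\bu_j$ maps solutions to solutions), reducing (ii) to (i); you instead track $\delta_j=\bu_1^2-\bu_j^2$ directly, which works because only even powers appear, needs no sign assumption, and avoids the equivariance step. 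Both are valid; the paper's reduction reuses case (i) verbatim, while your $\delta_j$ computation is more self-contained.

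One step in your rigorization is wrong as stated: the differential inequality $\frac{d}{dt}\Delta_{1j}\ge -\Delta_{1j}\,g(t)$ does \emph{not} hold ``identically for $\Delta_{1j}$ of either sign.'' After factoring, the reinforcing contribution is $\Delta_{1j}\sum_p c_p p\,Q_p$ with $Q_p=\bu_1^{p-2}+\dots+\bu_j^{p-2}\ge 0$, so discarding it preserves the direction of the inequality only when $\Delta_{1j}\ge 0$; when $\Delta_{1j}<0$ it flips. Since the whole point is to rule out $\Delta_{1j}<0$, you cannot invoke a sign-free inequality this way. The repair is easy and worth stating: either keep the exact identity $\frac{d}{dt}\Delta_{1j}=\Delta_{1j}\bigl(\sum_p c_p p\,Q_p-g(t)\bigr)$, a linear ODE in $\Delta_{1j}$ whose solution has the sign of $\Delta_{1j}(0)$; or apply Gr\"onwall only on the maximal interval where $\Delta_{1j}\ge 0$ and argue by continuity that a strictly positive gap can never reach zero (since there $\Delta_{1j}(t)\ge\Delta_{1j}(0)e^{-\int_0^t g}>0$), handling exact ties by the coordinate-swap symmetry of the ODE and uniqueness of solutions, as the paper does when $\bu_1(0)=\dots=\bu_\ell(0)$. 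The same remark applies verbatim to your $\delta_j$ argument in case (ii). With this fix both cases go through.
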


Lemma~\ref{lem:sym-breaking-app} gives us that if all correlations are positive at initialization, $j^* = \text{argmax}_{j \in \{1, ..., k \}} \bw_i(t)^T \bv_j$ is fixed over time. 
The same argument holds up to a sign change for even activation functions and arbitrary initializations. 

\begin{proof} Let us start with the simplest case $(i)$ where the dot products are assumed to be positive at initialization.
Since the dot products are assumed to be positive at initialization, they remain non-negative at all times (see Eq.~\ref{eq:dot-prods-nonnneg} and the comments therein). That is, we have $\bu_j(t) \geq 0$ for all $j \in [k]$ and for all $t \geq 0$.

Since $\bu_1$ is assumed to be the leading correlation at initialization, this implies that
\[
    \bu_1(0) = \max_{j \in \{1,...,k\} } \bu_j(0) > 0.
\]
We want to show 
\[
    \bu_1(t) = \max_{j \in \{1,...,k\} } \bu_j(t) \quad \text{for all times} \ t.
\]
Our strategy is to show that $\frac{d}{dt}(\bu_1- \bu_j)$ is non-negative if $\bu_1-\bu_j$ is positive. 
We will use that $\bu_1 - \bu_{j'}$ is non-negative for all $j'$. 
Taking the difference between the time derivative of $\bu_1$ and $\bu_j$ gives
\begin{align}\label{eq:ODE-diff}
    \frac{d}{dt} (\bu_1 - \bu_j) = \sum_{p \geq p^*} c_p p \bigl( (\bu_1^{p-1} - \bu_j^{p-1}) - (\bu_1 -\bu_j) \sum_{j'=1}^k \bu_{j'}^{p} \bigr).
\end{align}
The following chain of inequalities completes the argument
\begin{align*}
    \bu_1^{p-1} - \bu_j^{p-1} &\geq (\bu_1^{p-1} - \bu_j^{p-1})(\sum_{j'=1}^k \bu_{j'}^2) \hspace{5.0 cm} \text{\color{gray} since $\sum \bu_j^2 \leq 1$ and $\bu_1 \geq \bu_j$} \\
    &= (\bu_1 - \bu_j)(\bu_1^{p-2} + \bu_1^{p-3} \bu_j + ... + \bu_j^{p-2}) (\sum_{j'=1}^k \bu_{j'}^2) \\
    &\geq (\bu_1 - \bu_j) \bu_1^{p-2} (\sum_{j'=1}^k \bu_{j'}^2) \hspace{7.5 cm} \text{\color{gray} since $\bu_j \geq 0$} \\
    &\geq (\bu_1 - \bu_j) (\sum_{j'=1}^k \bu_{j'}^p) \hspace{6.5 cm} \text{\color{gray} since $\bu_1 \geq \bu_{j'}$ for all $j'$}
\end{align*}

There is a remaining step to complete the proof of the case $(i)$: the leading correlation might not be unique at initialization. 
That is the case $\bu_1(0)=...=\bu_{\ell}(0) > \max_{j \in \{ \ell+1, ..., k \} } \bu_j(0)$. 
Observe that Eq.~\ref{eq:ODE-diff} ensures that $\bu_1(t)=...=\bu_{\ell}(t)$ for all times and the argument above applies to the differences $\bu_1 - \bu_j$ for $j \in \{ \ell+1, ..., k \}$. This completes the proof of case $(i)$. 

We will handle the case $(ii)$ very similarly by using a sign flip argument.
First, let us show that the odd Hermite coefficients of $\sigma^*$ vanish to zero when $\sigma^*$ is even. 

Using Rodrigues' formula, we get that $h_p$ is an odd function when $p$ is odd (see the \href{https://en.wikipedia.org/wiki/Hermite_polynomials}{Wikipedia page}). 
Hence the integrand below is odd and integrating it from $-\infty$ to $\infty$ gives a zero, i.e., 
\begin{align*}
    \frac{1}{\sqrt{2 \pi}} \int_{-\infty}^\infty \sigma^*(x) h_p(x) \exp(-\frac{x^2}{2}) dx = 0.
\end{align*}
 
Therefore, in this case, for the ODE in Eq.~\ref{eq:ODE-orth}, the terms corresponding to the Hermite modes $p$ is zero when $p$ is odd. 
Let us compute the time derivative of $-\bu_j$
\begin{align*}
    \frac{d}{dt} (-\bu_j) &= (- \bu_j) \sum_{p \ \text{even}} c_p p (\bu_j^{p-2} - \sum_{j'=1}^k \bu_{j'}^p ), \\
    &= (- \bu_j) \sum_{p \ \text{even}} c_p p ( (-\bu_j)^{p-2} - \sum_{j'=1}^k (-\bu_{j'})^p ).
\end{align*}
The second equality holds since all Hermite modes are even. 
Hence, if $(\bu_1(t), ..., \bu_k(t))$ is a solution of this ODE with an initial condition $(\bu_1(0), ..., \bu_k(0))$, then $ (\xi_1 \bu_1(t), ..., \xi_k \bu_k(t))$ is also a solution with the initial condition $(\xi_1 \bu_1(0), ..., \xi_k \bu_k(0))$ where $\xi_i \in \{ \pm 1 \}$. 

Note that for any initialization, we can flip the sign of the negative correlations. 
Applying the result of case $(i)$, we get that the maximum correlation $\xi_1 \bu_1(t)$ is preserved for all times. This completes the proof of case $(ii)$. 
\end{proof}

\begin{proposition}[Directional Convergence]\label{prop:dir-converge-app} 
Let $\ell \!\in\! [k]$ and $I \!=\! [k] \setminus [\ell] $. 
Assume that $\bu_j(0) \!>\! 0$ for all $j \!\in\! [k]$ and wlog $\bu_1(0)\!=\! \dotsb \!=\! \bu_{\ell}(0) \!>\! \max_{j \in I} \bu_j(0) $. 
For any $\sigma^*$ with information exponent $p^* \!\geq \! 3$, the dynamics converge to 
\[
    \lim_{t \to \infty} \bu_j(t) \!=\! \frac{1}{\sqrt{\ell}} \ \ \text{for} \ \ j \in [\ell].
\]
Moreover, if $\sigma^*$ is even and with information exponent $p^* \!\geq\! 4$, assume that wlog $|\bu_1(0)| \!=\!\dotsb\!=\! |\bu_{\ell}(0)| \!>\! \max_{j \in I} |\bu_j(0)|$, the dynamics converge to 
\[
    \lim_{t \to \infty} \bu_j(t) \!=\! \frac{\operatorname{sgn}(\bu_j(0))}{\sqrt{\ell}} \ \ \text{for} \ \ j \in [\ell].  
\]
This implies $\lim_{t \to \infty} \bu_j(t) \!=\! 0 \ \ \text{for} \ j \!\in\! I$ for both cases. 
\end{proposition}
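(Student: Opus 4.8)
The plan is to work entirely with the orthogonal ODE~\eqref{eq:ODE-orth}, written in the homogeneous form $\tfrac{d}{dt}\bu_j=\bu_j\sum_{p\geq p^*}c_p p\bigl(\bu_j^{p-2}-\sum_{j'}\bu_{j'}^p\bigr)$, and to combine three ingredients: the leading-coordinate invariant of Lemma~\ref{lem:sym-breaking-app}, the monotonicity of $s_2:=\bu^TA^{-1}\bu=\|\bu\|_2^2$ from the proof of Theorem~\ref{thm:time-complexity-app}, and a new multiplicative invariant for the trailing coordinates. I would first dispose of the even case with $p^*\geq 4$: since $\sigma^*$ even forces $c_p=0$ for odd $p$, the substitution $\bu_j\mapsto\xi_j\bu_j$ with $\xi_j=\operatorname{sgn}(\bu_j(0))$ leaves~\eqref{eq:ODE-orth} invariant (every surviving term has $p$ even and $p-1$ odd) and turns the hypothesis into the all-positive, ordered hypothesis of the first part, with the conclusion transferring back coordinatewise; it therefore suffices to treat $\bu_j(0)>0$ with $p^*\geq 3$.

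In that case, positivity of the initial data together with the homogeneous form keep each $\bu_j(t)>0$ for all $t$ (cf.\ the positive-quadrant invariance~\eqref{eq:dot-prods-nonnneg}), and Lemma~\ref{lem:sym-breaking-app} gives $a(t):=\bu_1(t)=\dots=\bu_\ell(t)=\max_j\bu_j(t)$ for all $t$. For $j\in I$ the logarithmic derivatives subtract cleanly: $\tfrac{d}{dt}\log(\bu_j/a)=\sum_{p\geq p^*}c_p p\bigl(\bu_j^{p-2}-a^{p-2}\bigr)\leq 0$, because $0<\bu_j\leq a$ and $p-2\geq p^*-2\geq 1$; hence $\bu_j(t)/a(t)\leq\rho_j:=\bu_j(0)/\bu_1(0)<1$ for all $t$. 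Setting $\mu:=\sum_{j\in I}\rho_j^2<k-\ell$, this gives $s_2=\ell a^2+\sum_{j\in I}\bu_j^2\leq(\ell+\mu)a^2$; since $s_2$ is nondecreasing and $\leq 1$, I obtain the uniform two-sided bound $\delta^2:=s_2(0)/(\ell+\mu)\leq a(t)^2\leq 1/\ell$ with $\delta>0$ (note $s_2(0)=\sum_j\bu_j(0)^2>0$).

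Feeding these bounds back into the ratio inequality and bounding the (nonpositive) sum by its $p=p^*$ term, $\tfrac{d}{dt}\log(\bu_j/a)\leq c_{p^*}p^*\bigl(\bu_j^{p^*-2}-a^{p^*-2}\bigr)\leq c_{p^*}p^*(\rho_j^{p^*-2}-1)\delta^{p^*-2}=:-\varepsilon_j<0$ for all $t$, so $\bu_j(t)/a(t)\leq\rho_j e^{-\varepsilon_j t}\to 0$ and, $a$ being bounded, $\bu_j(t)\to 0$ for every $j\in I$. Consequently $\ell a(t)^2=s_2(t)-\sum_{j\in I}\bu_j(t)^2\to s_\infty:=\lim_t s_2(t)$, i.e.\ $a(t)^2\to s_\infty/\ell$. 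To finish I would show $s_\infty=1$: using $\tfrac12\tfrac{d}{dt}s_2=(1-s_2)\sum_{p\geq p^*}c_p p\|\bu\|_p^p$ (see~\eqref{eq:s2-sum}) and $\|\bu\|_{p^*}^{p^*}\geq\ell a^{p^*}\geq\ell\delta^{p^*}$, a value $s_\infty<1$ would force $\tfrac{d}{dt}s_2\geq 2(1-s_\infty)c_{p^*}p^*\ell\,\delta^{p^*}>0$ uniformly, hence $s_2$ to grow linearly, contradicting $s_2\leq 1$; so $s_\infty=1$, $a(t)\to 1/\sqrt\ell$, and $\bu_j(t)\to 0$ for $j\in I$, which is the claim.

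I expect the delicate point to be the passage from the finite-time strict ordering $\bu_j(t)<\bu_1(t)$ for $j\in I$ (immediate from Lemma~\ref{lem:sym-breaking-app}) to actual vanishing in the limit: the ordering alone is consistent with convergence to a mixed fixed point supported on more than $[\ell]$ coordinates. The multiplicative invariant $\bu_j/a$, together with the uniform lower bound $a(t)\geq\delta$ — itself obtained from that invariant via the upper bound on $s_2$ — is what upgrades the inequality to geometric decay and rules this out; the analogous no-collapse fact $s_\infty=1$ handles the leading coordinate. One should also note that the flow is globally defined, since $s_2\leq 1$ confines the trajectory to the compact set $D$.
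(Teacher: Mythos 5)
Your argument is correct, and it reaches the conclusion by a genuinely different mechanism than the paper's own proof. Both share the same preliminaries---positive-quadrant invariance, Lemma~\ref{lem:sym-breaking-app} for the preserved ordering $\bu_1(t)=\dots=\bu_\ell(t)=\max_j\bu_j(t)$, the sign-flip reduction of the even case, and the monotone evolution of $s_2$ from~\eqref{eq:s2-sum}---but they diverge at the decisive step. The paper classifies the fixed points compatible with the preserved ordering: for $j\in[\ell]$ it shows $\bu_j^{p-1}\geq\bu_j\sum_{j'}\bu_{j'}^p$ with equality only if $\sum_j\bu_j^2=1$ and $\bu_{j'}=0$ for $j'\in I$, and then concludes via the assertion that $s_2$ increases to $1$ (and, in the main-text sketch, the stable manifold theorem) that the trajectory ends at the mixed point with value $1/\sqrt{\ell}$. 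You instead make convergence quantitative: the log-ratio $\log(\bu_j/a)$ is a Lyapunov function with derivative $\sum_{p\geq p^*}c_p p\,(\bu_j^{p-2}-a^{p-2})\leq 0$, which first gives the uniform lower bound $a(t)\geq\delta$ through $s_2\leq(\ell+\mu)a^2$ together with monotonicity of $s_2$, and then, keeping only the $p=p^*$ term, yields geometric decay $\bu_j/a\leq\rho_j e^{-\varepsilon_j t}$ (this uses $c_{p^*}>0$, which the paper assumes); your contradiction argument forcing $s_\infty=1$ replaces the paper's terse claim that $s_2$ reaches $1$. What your route buys is explicit exponential rates and no appeal to fixed-point classification or stable-manifold arguments; what the paper's route buys is brevity, at the price of leaving the ``trajectory converges to a fixed point'' step implicit. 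Two small points to tighten in your write-up: strict positivity $\bu_j(t)>0$ for all finite $t$ (needed to take logarithms) deserves the one-line Gr\"onwall bound $\frac{d}{dt}\log\bu_j\geq-\sum_p c_p p$, finite by Assumption~\ref{ass:series-convergence}; and in the even case a coordinate $j\in I$ with $\bu_j(0)=0$ has $\operatorname{sgn}(\bu_j(0))=0$, but such a coordinate stays identically zero under~\eqref{eq:ODE-orth}, so the conclusion is unaffected (the paper's proof glosses over the same corner case).
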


\begin{proof} 
Since the dot products are assumed to be positive at initialization, they remain non-negative at all times (see Eq.~\ref{eq:dot-prods-nonnneg} and the comments therein). That is, we have $\bu_j(t) \geq 0$ for all $j \in [k]$ and for all $t \geq 0$.

Let us recall the evolution of the total alignment from the proof of Theorem~\ref{thm:time-complexity-app}
\[
    \frac{1}{2} \frac{d}{dt} s_2 = \sum_{p \geq p^*} c_p p (\sum_{j=1}^k \bu_j^p) (1 - s_2).
\]
For all $s_2 \in (0,1)$, the time derivative is positive and $s_2$ increases over time until it reaches $s_2 = 1$. 
Once $s_2 = 1$ is satisfied, the time derivative vanishes and $s_2$ remains constant.
In other words, once $\bw$ reaches the subspace of the index vectors, it remains in that subspace for all times. 

To specify in which direction this convergence happens, we analyze the fixed points of the ODE. 
The fixed points correspond to the solution of the following non-linear system of equations with $k$ variables
\[ 
    \sum_{p \geq p^*} c_p p \bigl( \bu_j^{p-1} - \bu_j \sum_{j'=1}^k \bu_{j'}^{p}\bigr) = 0 \quad \text{for} \quad j\! \in \! \{1,...,k\}.
\]
From the proof of Lemma~\ref{lem:sym-breaking-app}, the following holds for all times
\begin{align}\label{eq:ordering}
    \bu_1(t)=...=\bu_{\ell}(t) \!>\! \max_{j \in \{ \ell\!+\!1,...,k \} } \bu_j(t) \! \geq \! 0.
\end{align}
Moreover, note that for $j \in \{ 1,...,\ell \}$, and for $p \!\geq\! 3$
\begin{align}\label{eq:ineq-fixed-point}
    \bu_j^{p-1} \geq \bu_j^{p-1} (\sum_{j=1}^k \bu_j^2) \geq \bu_j \sum_{j'=1}^k \bu_{j'}^p 
\end{align}
where the first inequality is due to $ \sum \bu_j^2 \!\leq\! 1$ and the second one is due to $\bu_j \!>\! \bu_{j'}$ for $j' \in \{ \ell\!+\!1, ..., k \}$, $\bu_j \!>\! 0$, and $\bu_{j'} \!\geq\! 0$ for all times (see Eq.~\ref{eq:ordering}). 

For configurations for which the above inequality (Eq.~\ref{eq:ineq-fixed-point}) is strict, we have 
\[
    \sum_{p \geq p^*} c_p p \bigl(\bu_j^{p-1} - \bu_j \sum_{j'} \bu_{j'}^{p}\bigr) > 0.
\]
For configurations that are fixed points, the above inequality (Eq.~\ref{eq:ineq-fixed-point}) must be tight which implies the following 
\[
    \sum_{j=1}^k \bu_j^2 = 1 \quad \text{and} \quad \bu_{j'} = 0 \quad \text{for all} \ \ j' \in \{ \ell\!+\!1, ..., k\}.
\]
Finally, setting $\bu_j = \alpha$ for $j \in \{ 1,..., \ell \}$ and $\bu_{j'}=0$ for $j' \in \{ \ell\!+\!1,...,k \}$ gives $\alpha \!=\! \frac{1}{\sqrt{\ell}}$ which completes the proof for the first case.

We can handle the second case similarly by using a sign flip argument as done in the proof of Lemma~\ref{lem:sym-breaking-app}. 
In particular, consider the dynamics of $(\xi_1 \bu_1, ..., \xi_k \bu_k)$ with an initial condition $(\xi_1 \bu_1(0), ..., \xi_k \bu_k(0))$ where $\xi_i = \text{sgn}(\bu_i(0))$. 
The first case then shows that the dynamics converge to 
\[
    \lim_{t \to \infty} \xi_i \bu_i(t) = \frac{1}{\sqrt{\ell}}
\]
which is equivalent to the statement in the second case up to a multiplication of both sides with $\xi_i$. 

\end{proof}

\section{Saddle-to-Minimum Transition}
\label{sec:app-saddle-to-min}

First, we will evaluate the loss function at the `average' fixed point given by 
\begin{align*}
    \bar{\bold{w}} = \frac{\bold{v}_1 + \dots + \bold{v}_k}{\| \bold{v}_1 + \dots + \bold{v}_k \|}, 
\end{align*}
where 
\begin{align*}
    \| \bold{v}_1 + \dots + \bold{v}_k \|^2 = k + k(k-1) \beta , \quad \beta \in [0, 1].
\end{align*}

The dot product between $\bar{\bold{w}}$ and the index vector $\bold{v}_j$ is given by 
\begin{align*}
    \bar{\bw} \cdot \bv_j = \Bigl(\frac{1+(k-1) \beta}{k}\Bigr)^{1/2}.
\end{align*}
Using the expression in equation~\eqref{eq:corr-loss}, the loss function evaluated at  $\bar{\bold{w}}$ can be computed as follows 
\begin{align*}
L_\beta (\bar{\bw})  & = C - \sum_{p \geq p^*} c_p \sum_{j=1}^k (\bar{\bold{w}}^T \bv_j) ^p\\
& = C - \sum_{p \geq p^*} c_p k \Bigl(\frac{1+(k-1) \beta}{k}\Bigr)^{p/2}.
\end{align*}

In particular, evaluating the loss function at $\beta =0$ and $\beta =1$ implies
\begin{align*}
    L_\beta (\bar{\bw}) \bigl|_{\beta=0} = C - \sum_{p \geq p^*} c_p \Bigl(\frac{1}{k}\Bigr)^{p/2} k, \quad L_\beta (\bar{\bw}) \bigl|_{\beta=1} = C - \sum_{p \geq p^*} c_p k.
\end{align*}

Moreover, the derivative with respect to $\beta$ satisfies
\begin{align*}
    \frac{d}{d \beta} L_\beta(\bar{\bw}) = -\sum_{p \geq p^*} \frac{c_p p}{2} \Bigl(\frac{1+(k-1) \beta}{k}\Bigr)^{p/2-1} (k-1) < 0.
\end{align*}

Intuitively, the correlation between $\smash{f(\bx) = \sigma(\bar{\bw}^T \bx)}$ and the multi-index model increases as the target vectors approach each other, i.e., $\beta$ increases, hence decreasing the correlation loss.

The manifold geometry of the unit sphere makes it complicated to compute the Hessian. 
One may need to change the coordinate system to polar coordinates, which makes computations cumbersome for $d \geq 3$. 

We consider all paths on the unit sphere passing through $\bar{\bw}$.
In particular, we consider the unit circle spanned by $\bar{\bw}$ and $\bv$ such that $\bar{\bw}^T \bv = 0$, that is 
\begin{align*}
    \bw(\theta) = \bar{\bw} \cos(\theta) + \bv \sin(\theta).
\end{align*}

The loss on the circle is then one-dimensional
\begin{align*}
    L_\beta(\theta) = C - \sum_{p = p^*}^P c_p \sum_{j=1}^k ((\bar{\bw} \cos(\theta) + \bv \sin(\theta))^T \bv_j)^p
\end{align*}
which can be viewed as a periodic function with period $2 \pi$. 
This allows us to compute derivatives without worrying about the reparameterization of the unit sphere. 
Note that we truncated the series at the $P$-th term due to technical reasons. 
This corresponds to choosing a polynomial target activation function $\sigma^*$ with degree $P$ and information exponent $p^*$.
Let us restate Theorem~\ref{thm:saddle-min} below and then give the proof. 

\begin{theorem}[Saddle-to-Minimum]\label{thm:saddle-min-app} 
Assume $d > k$. $\bar{\bw}$ is a strict saddle when the dot product is upper bounded by 
\[
    \beta < \frac{p^*-2}{k+p^*-2},
\]
whereas $\bar{\bw}$ is a local minimum when the dot product is lower bounded by 
\[
    \frac{P-2}{k+P-2} < \beta. 
\]
Therefore, for the case $\sigma^* \!=\! h_{p^*}$
\[
    \beta_c = \frac{p^*-2}{k+p^*-2}
\]
is the sharp threshold characterizing the transition from a strict saddle to a local minimum. 
\end{theorem}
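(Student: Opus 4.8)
The strategy is to compute the Riemannian Hessian of $L$ at $\bar{\bw}$ by second-differentiating along great circles, which are exactly the geodesics of $\S^{d-1}$: for a unit vector $\bv$ with $\bv^T\bar{\bw}=0$, the curve $\bw(\theta)=\bar{\bw}\cos\theta+\bv\sin\theta$ is a unit-speed geodesic through $\bar{\bw}$, so $\frac{d^2}{d\theta^2}L_\beta(\theta)\big|_{\theta=0}$ equals the Hessian quadratic form $\nabla^2_{\S} L(\bar{\bw})[\bv,\bv]$, and sweeping $\bv$ over the whole tangent space $\bar{\bw}^\perp$ recovers the full Hessian. Since $\bar{\bw}$ is a fixed point (the Euclidean gradient is parallel to $\bar{\bw}$), the first derivative $\frac{d}{d\theta}L_\beta(\theta)\big|_{0}$ vanishes for every such $\bv$ — equivalently $\sum_j \bv^T\bv_j=\|\sum_j\bv_j\|\,(\bv^T\bar{\bw})=0$ — so classifying $\bar{\bw}$ reduces entirely to the sign of the second derivative over all admissible $\bv$.

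Next I would carry out the differentiation. Writing $c:=\bar{\bw}^T\bv_j=\sqrt{(1+(k-1)\beta)/k}$ (the same for every $j$, and strictly positive since $1+(k-1)\beta\ge 1$) and $q_j:=\bv^T\bv_j$, one has $L_\beta(\theta)=C-\sum_{p=p^*}^{P}c_p\sum_{j=1}^k(c\cos\theta+q_j\sin\theta)^p$, and a direct computation at $\theta=0$, using $\sum_j q_j=0$, yields
\[
    \nabla^2_{\S}L(\bar{\bw})[\bv,\bv]=\sum_{p=p^*}^{P}c_p\,p\,c^{p-2}\bigl(kc^2-(p-1)Q\bigr),\qquad Q:=\sum_{j=1}^k q_j^2=\|V^T\bv\|_2^2 .
\]
The key observation is that this expression is \emph{affine and non-increasing in the single scalar $Q$} (because each $c_p\,p(p-1)c^{p-2}\ge 0$). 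So everything comes down to the range of $Q=\bv^TVV^T\bv$ over unit $\bv\in\bar{\bw}^\perp$. Here I would invoke the eigen-analysis of $A=V^TV$ (eigenvalue $1+(k-1)\beta$ on $\bone_k$, eigenvalue $1-\beta$ with multiplicity $k-1$ on $\bone_k^\perp$) together with Lemma~\ref{lem:eigenvalue-equality}: $VV^T$ has eigenvalue $1+(k-1)\beta$ along $\sum_j\bv_j\propto\bar{\bw}$, eigenvalue $1-\beta$ on the $(k-1)$-dimensional space $\{V\bz:\bz\perp\bone_k\}$, and $0$ off $\operatorname{span}\{\bv_j\}$. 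Removing the top eigenvector $\bar{\bw}$, the quantity $Q$ ranges exactly over $[0,\,1-\beta]$, the maximum $1-\beta$ attained inside the equiangular subspace and the minimum $0$ attained on any direction orthogonal to all index vectors (which exists since $d>k$).

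With the formula and the range of $Q$ in hand, both halves are bookkeeping. For the strict-saddle regime, take $\bv$ with $Q=1-\beta$ and substitute $kc^2=1+(k-1)\beta$: the $p=p^*$ term is $c_{p^*}p^*c^{p^*-2}\bigl((1+(k-1)\beta)-(p^*-1)(1-\beta)\bigr)$, which is negative exactly when $\beta<\frac{p^*-2}{k+p^*-2}$, and since $(p-1)(1-\beta)$ only increases with $p$, every higher-order term is then $\le 0$; hence $\nabla^2_{\S}L(\bar{\bw})[\bv,\bv]<0$, a direction of strictly negative curvature, so $\bar{\bw}$ is a strict saddle. For the local-minimum regime, monotonicity in $Q$ means the Hessian form is minimized over all tangent directions at $Q=1-\beta$; there the $p=P$ term is the most negative, it is positive exactly when $\beta>\frac{P-2}{k+P-2}$, and in that case every term is positive, so the Hessian is positive definite on all of $\bar{\bw}^\perp$ and $\bar{\bw}$ is a strict local minimum. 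Finally, for $\sigma^*=h_{p^*}$ one has $P=p^*$, the two thresholds coincide, and $\beta_c=\frac{p^*-2}{k+p^*-2}$ is the exact transition point.

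\textbf{Anticipated obstacle.} The delicate step is the second one: correctly identifying the exact range of $Q=\|V^T\bv\|_2^2$ over the \emph{tangent space} $\bar{\bw}^\perp$ rather than over all of $\R^d$ — i.e.\ isolating the component of $\bv$ outside $\operatorname{span}\{\bv_j\}$, verifying via Lemma~\ref{lem:eigenvalue-equality} that the extremal tangent direction is the one orthogonal to $\bone_k$ inside the equiangular subspace, and confirming that great circles through $\bar{\bw}$ really do realize the full Riemannian Hessian. Once the Hessian is reduced to an affine function of this one scalar, the threshold computations and the saddle-vs-minimum dichotomy follow mechanically.
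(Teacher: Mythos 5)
Your proposal is correct and follows essentially the same route as the paper: parameterize great circles through $\bar{\bw}$, show the second derivative at $\theta=0$ reduces to $\sum_p c_p p\,c^{p-2}\bigl(1+(k-1)\beta-(p-1)Q\bigr)$ with $Q=\|V^T\bv\|_2^2$, and use the eigenstructure of $VV^T$ (via Lemma~\ref{lem:eigenvalue-equality}) to pin $Q$ to $1-\beta$ on tangent directions inside $\operatorname{span}\{\bv_j\}$ and to $0$ on directions orthogonal to it (available since $d>k$), yielding the two thresholds. Your observation that the form is affine and non-increasing in the single scalar $Q$ is a mild streamlining of the paper's separate second-largest/smallest-nonzero eigenvalue computations, but the substance is identical.
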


\begin{proof} 
The first derivative is given by 
\begin{align*}
    \frac{d}{d \theta} L_\beta (\theta) = - \sum_{p = p^*}^P c_p p \sum_{j=1}^k ((\bar{\bw} \cos(\theta) + \bv \sin(\theta))^T \bv_j)^{p-1} (-\bar{\bw} \sin(\theta) + \bv \cos(\theta))^T \bv_j.
\end{align*}
Evaluating the first derivative at zero, we get 
\begin{align*}
    \frac{d}{d \theta} L_\beta (\theta) \bigg|_{\theta=0} = - \sum_{p = p^*}^P c_p p \sum_{j=1}^k ( \bar{\bw}^T \bv_j)^{p-1} (\bv^T \bv_j).
\end{align*}

The inner sum can be viewed as a dot product between $[\alpha, ..., \alpha]$ and $[\bv^T \bv_1, ..., \bv^T \bv_k]$. 
Observe that the evaluation of the derivative at $\theta = 0$ vanishes, since $\sum_{j=1}^k \bv^T \bv_j = c \bv^T \bar{\bw} = 0$, which implies that $\bar{\bw}$ is a fixed point. 
To study the curvature, we need to compute the second derivatives. 

The second derivative is given by 
\begin{align*}
    \frac{d^2}{d \theta^2} L_\beta (\theta) = - \sum_{p = p^*}^P c_p p \sum_{j=1}^k \biggl( &(p-1)(\bar{\bw} \cos(\theta) + \bv \sin(\theta))^T \bv_j)^{p-2} ((-\bar{\bw} \sin(\theta)+ \bv \cos(\theta))^T \bv_j)^2  \\
    &+ ((\bar{\bw} \cos(\theta) + \bv \sin(\theta))^T \bv_j)^{p-1} (-\bar{\bw} \cos(\theta) - \bv \sin(\theta))^T \bv_j \biggr ).
\end{align*}

Evaluating the second derivative at zero, we get 
\begin{align}\label{eq:second-derivative}
    \frac{d^2}{d \theta^2} L_\beta (\theta) \bigg|_{\theta=0} = \sum_{p = p^*}^P c_p p \sum_{j=1}^k \bigl( - (p-1) (\bar{\bw}^T \bv_j)^{p-2} (\bv^T \bv_j)^2 + (\bar{\bw}^T \bv_j)^{p} \bigr ).
\end{align}

The curvature at $\bar{\bw}$ is positive in the direction of $\bv$ if the following holds for all $p \geq p^*$ 
\begin{align*}
    & \sum_{j=1}^k (p-1) (\bar{\bw}^T \bv_j)^{p-2} (\bv^T \bv_j)^2 < \sum_{j=1}^k (\bar{\bw}^T \bv_j)^{p} \quad \Leftrightarrow \\
    & (p-1) (\frac{1+\beta(k-1)}{k})^{p/2-1} \sum_{j=1}^k (\bv^T \bv_j)^2 < k \bigl(\frac{1+\beta(k-1)}{k}\bigr)^{p/2} \quad \Leftrightarrow \\
    & (p-1) \sum_{j=1}^k (\bv^T \bv_j)^2 < (1+\beta(k-1)). 
\end{align*}
If $\bv \perp \bold{V}$ where $\bold{V} = \text{Span} ( \bv_1, ..., \bv_k )$, the above inequality is trivially true. 
We will study the case $ \bv \in \bold{V} $ in what follows by giving an upper bound on the LHS term. 
By Lemma~\eqref{lem:eigenvalue-equality} we know that the leading eigenvector of $V V^T$ is $\bar{\bw}$ and the orthogonal space to this direction in the subspace $\bold{V} = \text{Span} ( \bv_1, ..., \bv_k ) $ has eigenvalues $1-\beta$. 
This implies that
\begin{align*}
    \max_{\substack{\bv^T \bar{\bw}=0 \\ \bv \in \bold{V}}} \ \bigl \| \begin{bmatrix}
        \bv_1^T \\
        \vdots \\
        \bv_k^T 
    \end{bmatrix} \bv \bigr \|_2^2 =  \max_{\substack{\bv^T \bar{\bw}=0 \\ \bv \in \bold{V}}} \ \bv^T \begin{bmatrix}
        \bv_1 & \dots & \bv_k 
    \end{bmatrix} \begin{bmatrix}
        \bv_1^T \\
        \vdots \\
        \bv_k^T 
    \end{bmatrix} \bv = \lambda_{\max, 2} (V V^T) = \lambda_{\max, 2} (V^T V) = 1-\beta
\end{align*}
where $\lambda_{\max, 2}$ denotes the second largest eigenvalue. 
The following condition gives us the desired upper bound on the LHS
\begin{align*}
    (p-1) (1-\beta) < 1 + \beta(k-1) \ \ \Leftrightarrow \ \ \beta > \frac{p-2}{k+p-2}.
\end{align*}

We want this to be satisfied for all $P \geq p \geq p^*$, and the necessary and sufficient condition for this is 
\begin{align}\label{eq:beta-lower-bound}
    \beta > \frac{P-2}{k+P-2}. 
\end{align}
This argument applies to any $\bv \perp \bar{\bw}$. 
Wlog, consider $\bar{\bw}$ as the north pole, and an arbitrary $\bv \perp \bar{\bw}$ located on the equator. 
For every path trough $\bar{\bw}$ we can find a $\bv \perp \bar{\bw}$ on the equator, such that the circle through $\bv$ and $\bar{\bw}$ coincides with the given path in a neighborhood of $\bar{\bw}$.
Hence we conclude that $\bar{\bw}$ is a local minimum if $\beta$ is bigger than the lower bound~\eqref{eq:beta-lower-bound}.

The curvature at $\bar{\bw}$ is negative in the direction of $\bv$ if the following holds for all $p \geq p^*$ 
\begin{align*}
    (1+\beta(k-1)) < (p-1) \sum_{j=1}^k (\bv^T \bv_j)^2.
\end{align*}
This calls for a lower bound on the RHS term. Let us first consider the case when $\bv \in \bold{V}$. 
Such a vector $\bv$ must lie in the orthogonal complement of the kernel of $V^T$, since the image of $V$ is orthogonal to the kernel of $V^T$. In particular, $V^T\bv \neq0$ and hence $\| V^T\bv\|_2^{2} > 0$. 
This allows us to argue that
\begin{align*}
    \min_{\substack{\bv^T \bar{\bw}=0 \\ \bv \in \bold{V}}} \ \bigl \| \begin{bmatrix}
        \bv_1^T \\
        \vdots \\
        \bv_k^T 
    \end{bmatrix} \bv \bigr \|_2^2 =  \min_{\substack{\bv^T \bar{\bw}=0 \\ \bv \in \bold{V}}} \ \bv^T \begin{bmatrix}
        \bv_1 & \dots & \bv_k 
    \end{bmatrix} \begin{bmatrix}
        \bv_1^T \\
        \vdots \\
        \bv_k^T 
    \end{bmatrix} \bv = \lambda_{\min \neq 0} (V V^T) = \lambda_{\min} (V^T V) = 1-\beta, 
\end{align*}
where $\lambda_{\min \neq 0}(VV^T) $ is the smallest nonzero eigenvalue of $VV^T$. 
The following condition gives us the desired lower bound on the RHS.
\begin{align*}
    (1+\beta(k-1)) < (p-1)(1-\beta) \ \ \Leftrightarrow \ \ \beta < \frac{p-2}{k+p-2}
\end{align*}
for all $p \geq p^*$. 
Then the necessary and sufficient condition for this is 
\begin{align}\label{eq:beta-upper-bound}
    \beta < \frac{p^*-2}{k+p^*-2}. 
\end{align}

In particular, if $d=k$, the curvature is negative in every direction since $\mathbb{S}^{d-1} \subset \bold{V}$. 
This implies that $\bar{\bw}$ is a local maximum and not a saddle. 
However, when $k<d$, we can compute the curvature in a direction $\bv^\perp$ that is orthogonal to the span of $\{ \bv_1, ..., \bv_k \}$ and also orthogonal to $\bar{\bw}$ as a result. 
Evaluating Eq.~\eqref{eq:second-derivative}, observe that the first term is zero and the second term is positive, hence yielding a positive second derivative in this direction. 

Therefore, under condition~\eqref{eq:beta-upper-bound}, $\bar{\bw}$ is a saddle point and it has index $k$. 

\end{proof}

One may wonder whether $\bar{\bw}$ is a fixed point for more general dot product matrices. 
However, this situation is specific to the equiangular frame. For example, consider $k=3$ and the following dot product matrix
\[
    \begin{bmatrix}
    1 & \beta_1 & \beta_2 \\
    \beta_1 & 1 & \beta_2 \\
    \beta_1 & \beta_2 & 1
    \end{bmatrix} \quad \quad \text{where} \quad \beta_1 \neq \beta_2.
\]
The average point $\bar{\bw}$ is not a fixed point in this case as $ \nabla L (\bar{\bw})$ is not parallel to $ \bar{\bw} $. 

\newpage

\section{Supplementary Simulations}

\subsection{Bifurcation as $k$ increases}

\begin{figure*}[h]
     \centering
     {\includegraphics[width=0.4\textwidth]{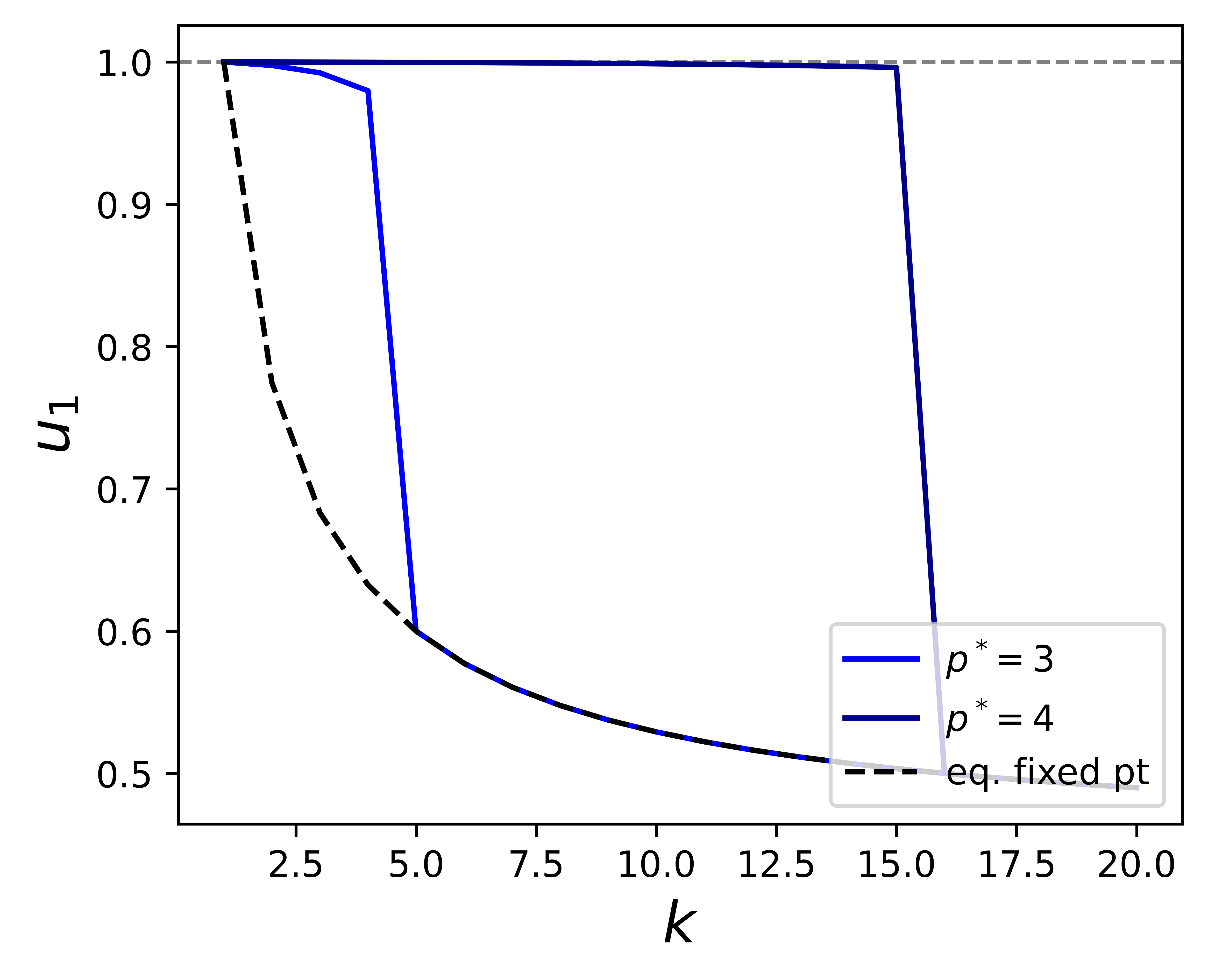}}
     {\includegraphics[width=0.4\textwidth]{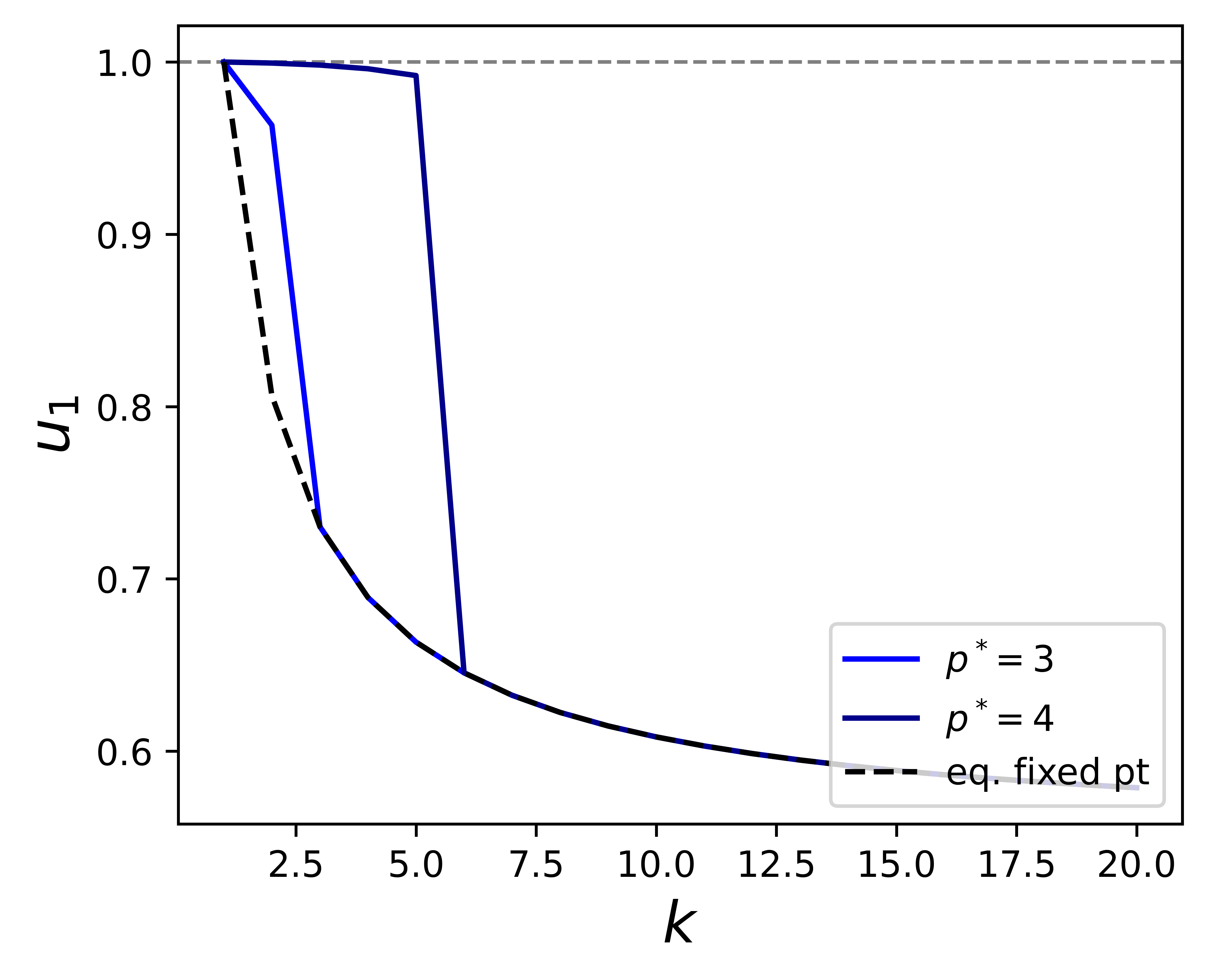}} \\
     \vspace{0.3cm}
    \caption{\textit{Maximum dot product at convergence as the number of index vectors $k$ increases from $1$ to $20$.} (left) $\beta=0.2$ and (right) $\beta=0.3$. Observe that increasing the number of index vectors pulls the flow away from preferring one of the index vectors to the average of the index vectors as indicated by the black dashed line. 
    }
\end{figure*}

\subsection{Phase Portraits}

\begin{figure*}[h]
     \centering
     {\includegraphics[width=0.3\textwidth]{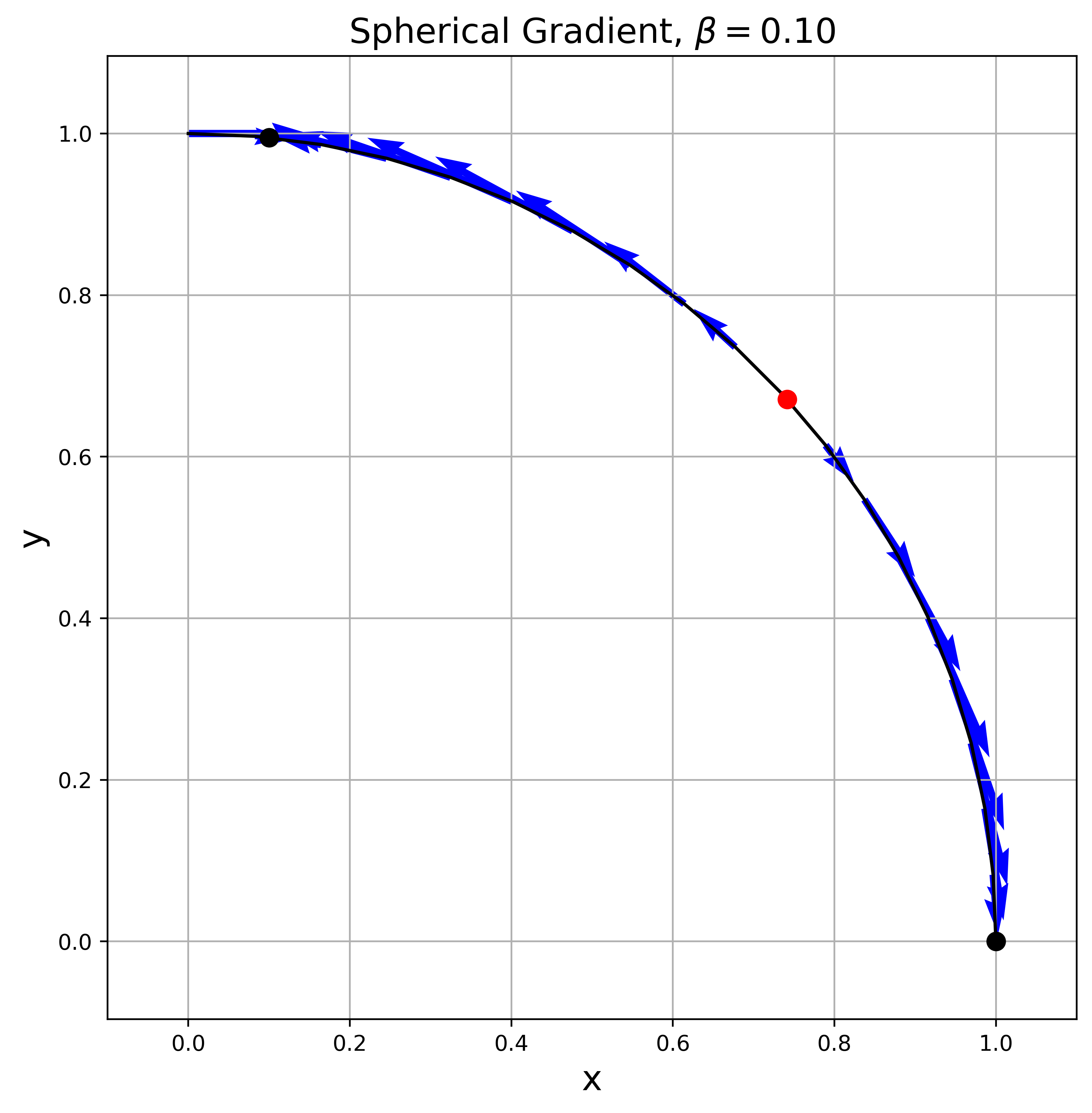}}
     {\includegraphics[width=0.3\textwidth]{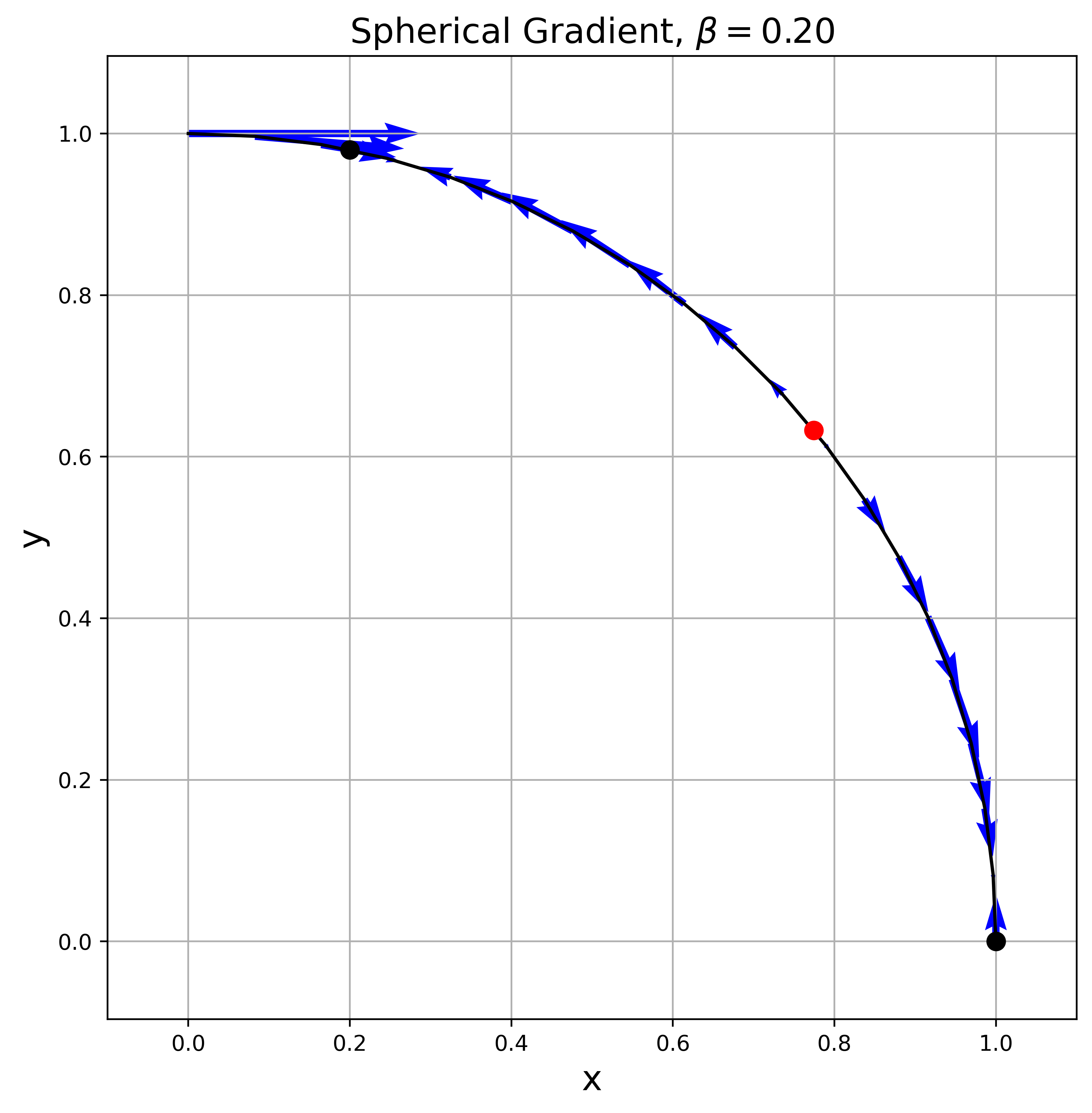}} 
     {\includegraphics[width=0.3\textwidth]{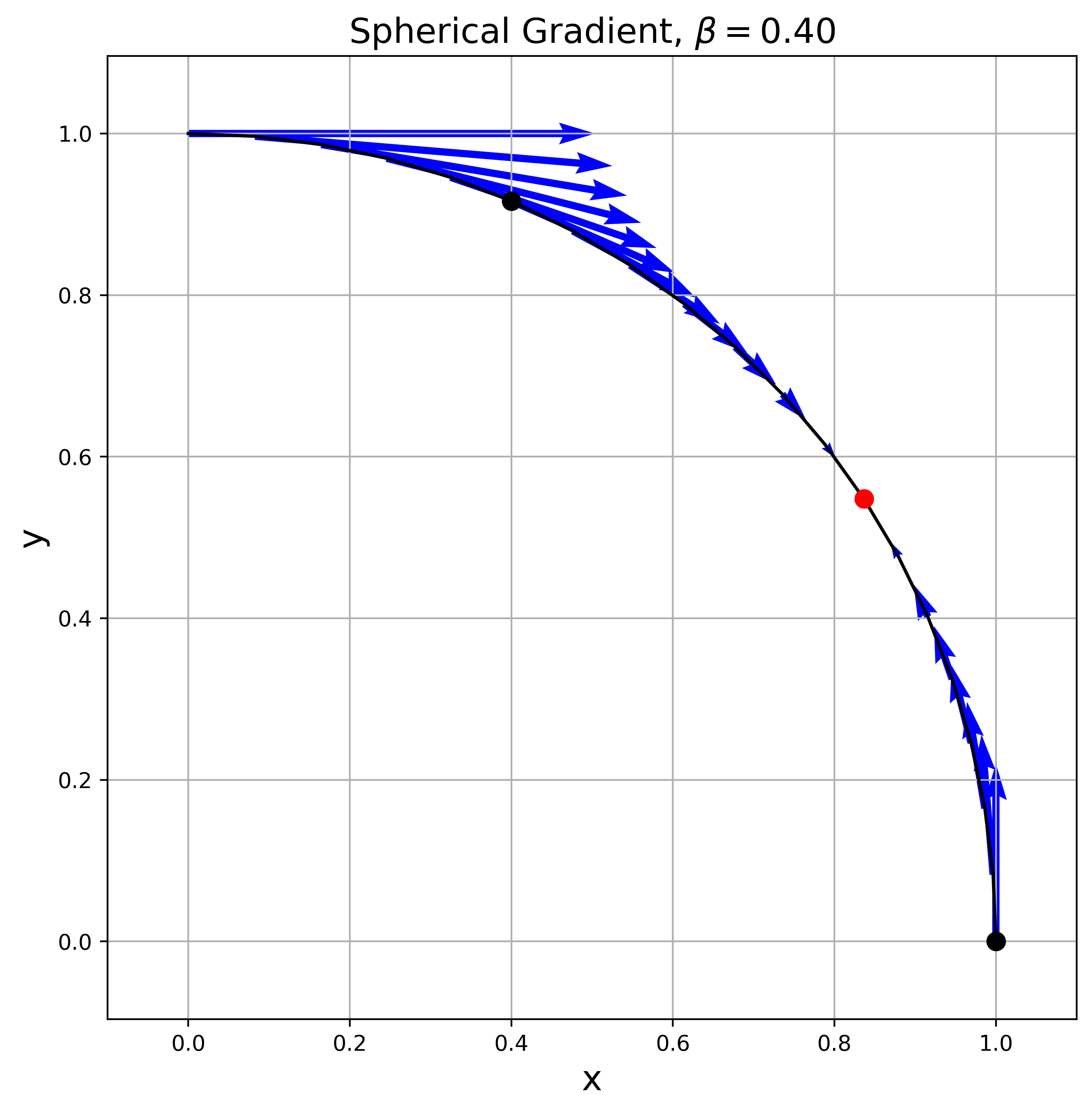}} \\
     \vspace{0.3cm}
    \caption{\textit{The spherical gradient flow vector field, as the dot product between the two vectors increases from left to right; $d=k=2$}. The two index vectors are shown as black dots. $\beta \in \{ 0.1, 0.2, 0.4 \}$ as shown in the title. 
    The average fixed point $\bar{\bw}$ (red dot) turns from a maximum (for $\beta \in \{ 0.1, 0.2 \}$) to a minimum for $\beta = 0.4$. The activation function here is $h_3$ hence the saddle-to-minimum transition happens at $\beta_c = 1/3$. 
    }
\end{figure*}

\end{document}